%% file: main.tex
\documentclass[letter, 10 pt, journal, twoside]{IEEEtran} 

\usepackage[english]{babel}
\usepackage[utf8]{inputenc}
\usepackage{times} 
\usepackage{cite}

\usepackage{booktabs}

\usepackage[hidelinks]{hyperref}
\usepackage[nolist]{acronym}

\usepackage{multirow}

\usepackage{subcaption}

\usepackage{siunitx}
\usepackage{graphicx} 
\usepackage{epsfig} 
\usepackage{pgfplots}
\pgfplotsset{compat=1.18}
\usepgfplotslibrary{statistics}
\usepgfplotslibrary{groupplots}

\usepackage{float}

\usepackage[export]{adjustbox}

\usepackage{xcolor}
\definecolor{myYellow}{rgb}{0.93,0.69,0.13}
\definecolor{myPurple}{rgb}{0.49,0.18,0.56}
\definecolor{myGreen}{rgb}{0.26 0.72 0.54}
\definecolor{darkgreen}{rgb}{0.272, 0.50, 0.376}
\definecolor{lightgreen}{rgb}{0.585, 0.82, 0.647}

\definecolor{cOrange}{RGB}{255,127,0}
\definecolor{cRed}{RGB}{214,39,40}
\definecolor{cBlue}{RGB}{31,119,180}
\definecolor{cGrey}{RGB}{127,127,127}
\definecolor{cGreen}{RGB}{44,160,44}

\colorlet{mydarkblue}{blue!30!black}

\usepackage{mathtools}
\usepackage{nicefrac}
\usepackage{amsmath}
\usepackage{amssymb}
\usepackage{bm} 
\usepackage{scalerel} 

\usepackage{amsthm}
\newtheoremstyle{compact}
  {4pt}   
  {4pt}   
  {\itshape}
  {}
  {\bfseries}
  {.}
  {0.5em}
  {}

\theoremstyle{compact}
\newtheorem{theorem}{Theorem}
\newtheorem{lemma}[theorem]{Lemma}
\newtheorem{proposition}[theorem]{Proposition}
\newtheorem{corollary}[theorem]{Corollary}

\newtheorem{remark}[theorem]{Remark}

\newcommand{\R}{\mathbb{R}}
\newcommand{\Cset}{\mathcal{C}}
\newcommand{\Ebar}{\mathcal{E}}
\newcommand{\abar}{\bar{a}}
\newcommand{\tbar}{\bar{\tau}}
\newcommand{\lbar}{\bar{\ell}}
\newcommand{\ldrop}{\ell^{\dagger}}
\newcommand{\vstar}{v^{\star}}
\newcommand{\tr}{\operatorname{tr}}
\newcommand{\diag}{\operatorname{diag}}
\newcommand{\rank}{\operatorname{rank}}
\newcommand{\Ker}{\operatorname{ker}}
\newcommand{\Range}{\operatorname{range}}
\newcommand{\Fw}{\mathcal{F}_w}

\DeclareMathAlphabet{\pazocal}{OMS}{zplm}{m}{n}
\def\xx{\mathsf{x}}

\usepackage{etoolbox}
\makeatletter%
\AfterPreamble{%
	\usepackage{hyperref}%
	\let\oldhypertarget\hypertarget%
	\renewcommand{\hypertarget}[2]{%
		\oldhypertarget{#1}{#2}%
		\protected@write\@mainaux{}{%
			\string\expandafter\string\gdef%
			\string\csname\string\detokenize{#1}\string\endcsname{#2}%
		}%
	}%
	\newcommand{\myhyperlink}[1]{%
		\hyperlink{#1}{\csname #1\endcsname}%
	}%
}
\makeatother%

\usepackage[ruled,vlined,linesnumbered]{algorithm2e}

\makeatletter
\def\BState{\State\hskip-\ALG@thistlm}
\makeatother

\usepackage{tikz}
\pgfdeclarelayer{foreground}
\pgfsetlayers{background, main, foreground}
\usetikzlibrary{quotes, angles, backgrounds, arrows, automata, shapes, positioning, calc, through, spy, decorations.pathreplacing, decorations.markings, arrows.meta, automata, petri, shapes.multipart, patterns}

\newlength{\panelwd}
\newlength{\panelht}

\tikzset{
    imglabel/.style={
      rectangle,
      inner sep=2pt,
      text=black,
      minimum height=1em,
      text centered,
      fill=white,
      fill opacity=1.0,
      text opacity=1,
      anchor=south west,
    },
  }
\tikzset{
	state/.style={
		rectangle,
		draw=black, very thick,
		minimum height=1.0em,
		text centered,
	},
}
\tikzset{
  on each segment/.style={
    decorate,
    decoration={
      show path construction,
      moveto code={},
      lineto code={
        \path [#1]
        (\tikzinputsegmentfirst) -- (\tikzinputsegmentlast);
      },
      curveto code={
        \path [#1] (\tikzinputsegmentfirst)
        .. controls
        (\tikzinputsegmentsupporta) and (\tikzinputsegmentsupportb)
        ..
        (\tikzinputsegmentlast);
      },
      closepath code={
        \path [#1]
        (\tikzinputsegmentfirst) -- (\tikzinputsegmentlast);
      },
    },
  },
  mid arrow/.style={postaction={decorate,decoration={
        markings,
        mark=at position .5 with {\arrow[#1]{stealth}}
      }}},
}

\tikzset{
  half circle/.style={
      semicircle,
      shape border rotate=180,
      anchor=chord center,
      minimum size=5mm
      }
}

\graphicspath{{./figures/}}

\title{\LARGE \bf Readiness Barrier Functions: Forward-Invariant Control Authority for Overactuated Multirotor Allocation} 

\author{Giuseppe Silano$^1$,~\IEEEmembership{Senior Member,~IEEE} 
    \thanks{$^1$Ricerca sul Sistema Energetico, Milan, Italy, and Czech Technical University, Prague, Czech Republic (e-mail: {\tt\small silangiu@fel.cvut.cz}).} 
    \thanks{This work was partially funded by the research fund for the Italian Electrical System (decree n. 388, Nov.~6th, 2024) and the GAČR project no.~26-22419S.}
    \thanks{The author used Google Gemini 3 and Anthropic Claude 4.8 Opus for proofreading and exposition refinement, but independently verified all outputs and assumes full responsibility for the article's content.}
}

\begin{document}

\maketitle
\thispagestyle{empty} 
\pagestyle{empty} 


\begin{acronym}
    \acro{CBF}[CBF]{Control Barrier Function}
    \acro{ESC}[ESC]{Electronic Speed Controller}
    \acro{MPC}[MPC]{Model Predictive Control}
    \acro{QP}[QP]{Quadratic Program}
    \acro{RMS}[RMS]{Root Mean Square}
\end{acronym}



\begin{abstract}

    Allocation schemes that greedily maximize a readiness metric over the actuator fiber bundle of an overactuated multirotor produce commands that jump between disconnected optimal strata, demanding actuator rates no motor can deliver; effort-minimizing schemes are continuous but cannot guarantee that wrench-rate authority stays above any certified level. We reconcile the two by treating authority as a forward-invariant quantity: a control barrier function on the log-determinant of the drag-aware actuator-authority co-metric, enforced at torque level by a quadratic program in the allocation null space. A single design inequality renders the certified set compact and strictly interior to the actuator box, with the readiness cost of any rotor deactivation given in closed form as $\ln(n/(n{-}m))$ for symmetric designs. Tracking is sacrificed only through an explicit alignment ratio, with wrench error bounded by $\mathcal{O}(\rho^{-1/2})$ and a robust variant handles motor-parameter uncertainty with a closed-form floor shift independent of the airframe matrix. On a hexarotor and a fully-actuated octorotor the closed-form gap matches simulation to machine precision; in the authority-scarce regime greedy maximization violates the certified floor and commits wrench errors up to eighty times larger than the proposed filter, which holds invariance of the certified set at negligible tracking cost.

\end{abstract}



\vspace{-0.3em}
\begin{IEEEkeywords}
     Aerial Systems: Mechanics and Control, Robot Safety, Optimization and Optimal Control, Redundant Robots.
\end{IEEEkeywords}



\vspace*{-0.25em}
\section{Introduction}
\label{sec:introduction}

Overactuated multirotors---tilted-propeller hexarotors, fully actuated octorotors, omnidirectional platforms---carry more rotors than the wrench dimensions they control \cite{OlleroTRO2022, Rashad2020RAM}. The redundancy is deliberate, enabling fault tolerance, decoupled force-and-moment generation, and physical interaction, but it creates a persistent decision problem: at every instant a continuum of rotor-speed vectors produces the commanded wrench, and the allocator must select one \cite{Johansen2013allocation, Bodson2002allocation}.

The classical answer minimizes actuator effort, treating allocation as constrained least-squares or a linear program \cite{Johansen2013allocation, Bodson2002allocation}. This template serves standard and fully-actuated multirotors \cite{Ryll2015overactuated, Michieletto2018fundamental} and predictive controllers with actuator models \cite{Bicego2020nmpc}, but handles control authority implicitly through the feasible set, never as a certified quantity. Such allocators are lightweight yet indifferent to a critical dynamic metric: \emph{wrench-rate authority}, the symmetric acceleration capacity to generate rapid wrench variations. Since thrust and drag both scale quadratically with spin \cite{Bicego2020nmpc, Bristeau2009ECC}, a slow rotor produces thrust with vanishing slope while a saturated rotor has spent its torque budget fighting drag; authority peaks at an interior \emph{sweet spot} and collapses at both ends. The collapse takes two geometric forms: stopping one rotor squashes the achievable wrench-rate set \emph{anisotropically}, saturating all rotors shrinks it \emph{isotropically} (Figure~\ref{fig:ellipsoid})---neither visible to an effort-minimizing cost.

\begin{figure}[tb]
    \centering
    \resizebox{0.60\columnwidth}{!}{%
        \input{figures/tikz/fig_ellipse.tex}}
    \vspace{-0.55em}
    \caption{Achievable wrench-rate set $\mathcal{R}$: sweet spot (blue), rotor~1 stopped (orange dashed), near saturation (red dotted). The commanded $\dot w_{\mathrm{des}}$ must lie inside $\mathcal{R}$ to be realizable.} 
    \label{fig:ellipsoid}
    \vspace{-1.5em}
\end{figure}

Determinant-based manipulability, originating with Yoshikawa \cite{Yoshikawa1985manip}, serves broadly as a singularity-avoidance penalty. Recent work extends it to multirotors via a drag-aware authority co-metric---\emph{Drag-Aware Aerodynamic Manipulability} (DAAM)---whose log-determinant measures the remaining wrench-rate volume, maximized along the \emph{task fiber} of redundant speeds satisfying the wrench command to repel both collapse modes \cite{Franchi2026aeropromptness}. That framework, however, has a critical open defect: the fiberwise maximizer is set-valued, with optimal selections on disconnected sheets \cite{Franchi2026aeropromptness, Franchi2026muscle}.

The obstruction is structural. The signed-quadratic thrust map partitions rotor-speed space into sign orthants whose boundaries are the loci where a rotor reverses direction; there the thrust slope vanishes, the allocation Jacobian degenerates, and the actuator rate sustaining a bounded wrench rate diverges \cite{Franchi2026foliation}. Tracking a continuous task across orthants thus demands an acceleration no motor can deliver: the rotors saturate and inject the very wrench error the allocator exists to null. The standard remedy, low-pass filtering \cite{Johansen2013allocation, Bodson2002allocation}, bounds the rate demand but drags the system \emph{through} the low-authority crossing rather than around it (Section~\ref{sec:sim}). Across the literature, then, authority is a byproduct of effort minimization or an unconstrained maximization objective, never a strict forward-invariant property---and a greedy maximizer violates the very level it was designed to protect the moment a crossing becomes necessary.


We therefore argue that control authority should be \emph{certified} rather than greedily maximized. We treat the readiness log-determinant as a \ac{CBF} \cite{Ames2017CBFQP}, grounded in set-invariance theory \cite{Blanchini1999setinvariance} and its robust and discrete-time extensions \cite{Xu2015robustness, Agrawal2017discrete}, and floor it at the torque level via a quadratic program acting in the allocation null space. Because the barrier is a determinant on a fibered actuator space rather than a workspace constraint, feasibility turns on the geometry of the fiber tangent space, and the single inequality $h(v) \ge 0$ fences both failure modes at once. The certified set has $2^n$ components, so any physical transit between them must cross the zero-spin boundary and pay an exact readiness penalty; enforcing the barrier confines the trajectory to its initial component and guarantees \emph{forward invariance} of the certified set for the modeled dynamics. When tracking conflicts with the floor, the filter relaxes the wrench by a slack $\delta$ bounded by $\mathcal{O}(\rho^{-1/2})$, with $\rho$ the barrier relaxation weight, which permits a continuous allocation where a greedy maximizer must jump.

This work makes four contributions. \emph{(i) Readiness geometry:} a closed-form readiness gradient and an exact component-separation barrier (Theorem~\ref{thm:gap}), the readiness lost when rotor $k$ deactivates being $-\ln(1-\lambda_k)$ for a capacity-weighted leverage score $\lambda_k$. \emph{(ii) Safe set and invariance:} a single design inequality $\lbar>\ldrop$ rendering the safe set compact and strictly interior to the actuator box, removing auxiliary saturation constraints (Proposition~\ref{prop:struct}), with forward invariance for the modeled dynamics (Theorem~\ref{thm:inv}). \emph{(iii) Tracking:} an exact alignment ratio dictating when tracking stays exact (Theorem~\ref{thm:feas}) and an $\mathcal{O}(\rho^{-1/2})$ wrench-error bound otherwise (Theorem~\ref{thm:cont}). \emph{(iv) Robustness:} a variant separating a worst-case metric bound from robust invariance (Theorem~\ref{thm:robust}) via a closed-form drift term, without which the guarantee fails on $38\%$ of plants at $30\%$ mismatch. The certificates cover the allocation subproblem as a structural artifact of fiberwise optimization, independent of airframe or airspeed, and are validated on a hexarotor and a fully-actuated tilted octorotor over parametric uncertainty and transport delay. 




\vspace*{-0.35em}
\section{Modeling and Notation}
\label{sec:model}

An overactuated multirotor with $n$ rotors produces a control wrench $w \in \R^m$ ($n>m$) via $f(v) = A\phi(v)$, where $v\in\R^n$ collects the signed spin rates---we assume bidirectional propulsion with torque-controlled rotors capable of reversal through $v_i=0$---$\phi(v) = v\odot|v|$ is the signed-quadratic thrust map, and $A\in\R^{m\times n}$ is the constant, full-row-rank allocation matrix with columns $A_{\bullet i}$. The \emph{allocation fiber} $\Fw := f^{-1}(w)$ is the continuum of redundant configurations producing $w$, from which the allocator selects one point. Per rotor, 
\vspace{-0.25em}
\begin{equation}\label{eq:motor}
    m_i\dot v_i=-b_iv_i|v_i|+\tau_i,\qquad |\tau_i|\le\tbar_i ,
\end{equation}
where $m_i$ is the motor-propeller inertia, $b_i>0$ the lumped drag coefficient, and $\tbar_i>0$ the torque limit. The quadratic form $b_iv_i|v_i|$ is the quasi-static rotor-drag model \cite{Bicego2020nmpc, Bristeau2009ECC}, valid at hover and low advance ratio; in fast forward flight inflow modulates $b_i$, treated as bounded parametric variation in Section~\ref{sec:robust}. Stacking \eqref{eq:motor} gives the vector dynamics $\dot v=d(v)+M^{-1}\tau$, with $M=\diag(m_i)$, and the drag vector $d(v)=-M^{-1}\diag(b)\phi(v)$. The \emph{Symmetric Acceleration Capacity} (SAC) is the largest torque-limited acceleration available symmetrically at spin rate $v_i$:
\begin{equation}
    \abar_i(v_i)=\tfrac{\tbar_i-b_iv_i^2}{m_i},\quad
    \Ebar=\{v: |v_i|<v_i^{\mathrm{sat}}:=\sqrt{\tbar_i/b_i}\ \forall i\},
\end{equation}
where $v_i^{\mathrm{sat}}$ is the \emph{saturation speed} and $\Ebar$ the open feasible box on which $\abar_i>0$. With allocation Jacobian $J(v)=2A\diag(|v|)$ and SAC weighting $W(v)=\diag(\abar_i^2)$, the deliverable wrench rates form the ellipsoid $\mathcal{R}(v)=\{JW^{1/2}u:\|u\|\le1\}$, whose Gram matrix is the \emph{drag-aware authority co-metric} \cite{Franchi2026aeropromptness}:
\begin{align}\label{eq:D}
    D(v)&=J(v)W(v)J(v)^\top=4A\,\Psi(v)\,A^\top,\\
    \psi_i(v_i)&=v_i^2\abar_i(v_i)^2 ,
\end{align}
where $\Psi=\diag(\psi_i)$. Equivalently $\mathcal{R}(v)=\{x:x^\top D^{-1}x\le1\}$, with semi-axes $\sqrt{\lambda_i(D)}$ and $m$-dimensional volume $\omega_m\sqrt{\det D}$ ($\omega_m$ the unit-ball volume). The \emph{readiness level}
\begin{equation}\label{eq:volume}
    L(v):=\ln\det D(v)=2\ln\bigl(\mathrm{vol}_m\,\mathcal{R}(v)/\omega_m\bigr).
\end{equation}
is therefore a log-volume of achievable wrench rate.

A floor on $L$ is informative on three counts. As a \emph{volume}, $e^{L/2}$ is proportional to omnidirectional wrench-rate authority; as a \emph{worst case}, since $\lambda_{\min}(D)\ge e^{\lbar}/\bar\Lambda^{m-1}$ (Proposition~\ref{prop:struct}(iv)), the floor guarantees an authority radius along \emph{every} wrench-rate axis, not merely on average; and as a \emph{conditioning measure}, the determinant simultaneously
penalizes the loss of any axis while retaining sensitivity in the
remaining directions \cite{Yoshikawa1985manip, Franchi2026aeropromptness}---a trace criterion misses a collapsed axis and $\lambda_{\min}$ ignores the retained ones. Figure~\ref{fig:ellipsoid} makes the two collapse modes geometrically distinct. 

We reserve $h$ for the \emph{shifted barrier}, where $\lbar\in(\ldrop,L^{\mathrm{op}})$
is the chosen authority floor (specified in Section~\ref{sec:theory}):
\begin{equation}\label{eq:hdef}
    h(v)=L(v)-\lbar,\qquad \Cset=\{v\in\Ebar:h(v)\ge0\},
\end{equation}
$\Cset$ being the \emph{certified-authority set}. 
Throughout, $A$ satisfies the following: \textbf{A1}, $\rank A=m$, $n>m$, and $A_{\bullet i}\neq0$ for every $i$.

We use $i \in \{1,\dots,n\}$ as a generic rotor index in sums and stacked expressions, and $k$ to denote a specific rotor singled out for analysis, as in the dropout gap of Theorem~\ref{thm:gap}.

\begin{remark}[The column condition is not implied by rank]\label{rem:cols}
Full row rank does not force $A_{\bullet i}\neq0$: deleting a column of a $4\times6$ hexarotor matrix still leaves $\rank 4$. A rotor
with $A_{\bullet k}=0$ contributes nothing to the wrench and would
not appear in any physical design, yet $\lambda_k=0$, $\Delta_k=0$,
and the floor window~\eqref{eq:window} collapses, so the condition
must be stated.
\end{remark}



\section{Theory and Allocation Architecture}
\label{sec:theory}

The rotor-speed space $\Ebar$ is partitioned by the hyperplanes $v_k=0$ into $2^n$ \emph{sign orthants}, within each of which $L$ is smooth with global maxima at the sweet spots $\pm\vstar$. The fiberwise $\arg\max L$ is therefore set-valued: a greedy allocator can undergo a discontinuous branch change when the optimal strata $\pm\vstar$ exchange dominance as the commanded wrench reverses. Any such change passes rotor $k$ through $v_k=0$, where its thrust slope vanishes, its column drops out of $D$, and the Jacobian loses rank (Figure~\ref{fig:bundle}); our first result quantifies the cost.

\begin{figure}[tb]
    \centering
    \resizebox{0.88\columnwidth}{!}{%
        \input{figures/tikz/fig_bundle.tex}}
    \vspace{-0.3em}    
    \caption{Allocation fiber bundle. $\Cset$ confines the trajectory to a certified arc of $f^{-1}(w)$, isolated from dropout ($v_k=0$) and saturation.} 
    \label{fig:bundle}
    \vspace{-1.5em}
\end{figure}

\begin{theorem}[Dropout Gap]\label{thm:gap}
The largest readiness level attainable with rotor $k$ inactive ($\psi_k=0$) is $L^{\max}-\Delta_k$, where
\begin{equation}\label{eq:gap}
    \begin{aligned}
    \Delta_k &= -\ln(1-\lambda_k)\in(0,+\infty],\\
    \lambda_k &= \psi_k^\star A_{\bullet k}^\top S^{-1}A_{\bullet k}\in(0,1],
    \end{aligned}
\end{equation}
with $\lambda_k$ the $\psi^\star$-weighted leverage of rotor $k$. Here, $S := D(\vstar)/4$ is the co-metric evaluated at the sweet spot, so $S=\sum_i\psi_i^\star A_{\bullet i}A_{\bullet i}^\top$ with $\psi_k^\star := \psi_k(\vstar_k)$. Moreover: \emph{(i)} $\Delta_k=+\infty$ iff rotor $k$ is essential (i.e., the submatrix $A_{-k}$ missing column $k$ has $\rank A_{-k}<m$); \emph{(ii)} $\sum_k\lambda_k=m$; \emph{(iii)} for symmetric designs $\lambda_k=m/n$ and
\begin{equation}\label{eq:symgap}
    \Delta=\ln\!\left(\tfrac{n}{n-m}\right),
\end{equation}
which is a function of the redundancy ratio alone.
\end{theorem}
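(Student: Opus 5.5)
The key structural fact is that $L$ separates across rotors through $\Psi$. Since $D(v)=4A\Psi(v)A^\top$ and each $\psi_i$ depends only on $v_i$, each $\psi_i(v_i)=v_i^2\abar_i(v_i)^2$ attains its maximum $\psi_i^\star$ at a sweet-spot speed $\pm\vstar_i$ on $(-v_i^{\mathrm{sat}},v_i^{\mathrm{sat}})$. A one-variable computation gives $\vstar_i=\sqrt{\tbar_i/(3b_i)}$.

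\textbf{Step 1: reduce to $\psi$-space.} Because $\Psi\mapsto\ln\det(4A\Psi A^\top)$ is monotone nondecreasing in each $\psi_i$ (Loewner order), I will argue:
\begin{itemize}
\item $L^{\max}=\ln\det(4S)$, attained at $\psi=\psi^\star$.
\item With $\psi_k=0$ forced and all other coordinates free, the supremum is attained at $\psi_i=\psi_i^\star$ for $i\neq k$.
\end{itemize}
The constrained maximum is therefore $\ln\det\bigl(4(S-\psi_k^\star A_{\bullet k}A_{\bullet k}^\top)\bigr)$. Here I read "attainable" as unconstrained over $\Ebar$, consistent with $L^{\max}$ being the sweet-spot value.

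\textbf{Step 2: rank-one update.} $S\succ0$ because $\rank A=m$ (A1) and $\psi^\star>0$. The matrix determinant lemma then gives
\[
\det(S-\psi_k^\star A_{\bullet k}A_{\bullet k}^\top)=\det S\,(1-\lambda_k),
\]
which yields $\Delta_k=-\ln(1-\lambda_k)$, with the convention $\ln 0=-\infty$ when $\lambda_k=1$.

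\textbf{Bounds on $\lambda_k$.} The lower bound $\lambda_k>0$ follows from $A_{\bullet k}\neq0$ (A1) and $S^{-1}\succ0$. For $\lambda_k\le1$, write $\lambda_k=\|P_k\|^2$ with $P_k=\sqrt{\psi_k^\star}S^{-1/2}A_{\bullet k}$. The vectors $P_i$ are the columns of $B=S^{-1/2}A\Psi^{\star1/2}$, and $BB^\top=I_m$, so $B^\top B$ is an orthogonal projector. Hence $\lambda_k$, the $k$-th diagonal entry of $B^\top B$, lies in $[0,1]$.

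\textbf{Claim (ii).} This follows immediately: $\sum_k\lambda_k=\tr(B^\top B)=\tr(BB^\top)=m$.

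\textbf{Claim (i).} We have $\lambda_k=1$ iff $\det(S-\psi_k^\star A_{\bullet k}A_{\bullet k}^\top)=0$. That matrix equals $A_{-k}\Psi^\star_{-k}A_{-k}^\top$ with $\Psi^\star_{-k}\succ0$, so it is singular iff $\rank A_{-k}<m$. This gives $\Delta_k=+\infty$ exactly in the essential case, and $\Delta_k\in(0,\infty)$ otherwise.

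\textbf{Claim (iii).} I must fix what "symmetric design" means. I will take it to mean that a permutation group acting transitively on rotors preserves $S$-leverage: the group permutes the columns of $A$ up to an orthogonal transformation $Q$ of wrench space, $QAP=A$, and the rotors have identical parameters, so $\psi_i^\star$ is equal for all $i$. Under this action the $\lambda_i$ are invariant, because $S$ transforms as $QSQ^\top$. Transitivity makes all $\lambda_i$ equal, and (ii) then gives $\lambda_k=m/n$. Substituting, $\Delta=-\ln(1-m/n)=\ln(n/(n-m))$.

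\textbf{Main obstacle.} I expect the main obstacle to be this definitional point. If the paper's notion of symmetry is weaker, for instance just equal leverage scores, then (iii) is immediate from (ii). A secondary subtlety is justifying in Step 1 that "largest attainable" does not also impose the wrench-fiber constraint. If it did, the reduction to independent maximization of each $\psi_i$ would fail, and the theorem would have to be read fiber-free, as I am reading it here.
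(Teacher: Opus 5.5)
Your proof is correct and follows essentially the same route as the paper's: Loewner monotonicity puts the other rotors at their sweet spots, the matrix determinant lemma handles the rank-one downdate of $S$, and reading $\lambda_k$ as a diagonal entry of the rank-$m$ orthogonal projector $B^\top B$ gives the bounds, part (i), and the trace identity (ii). Your treatment of (iii) is more careful than the paper's. The paper only notes that equal $\psi^\star_i$ give $S=\psi^\star AA^\top$, which by itself does not force $\lambda_k=m/n$. It leaves implicit the needed equality of the column leverages of $A$, which your transitive-symmetry assumption supplies.
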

\begin{proof}
With $\psi_k=0$, Loewner monotonicity and the matrix determinant lemma applied to the rank-$1$ update of $S$ yield $\det D\le4^m\det S(1-\lambda_k)$, with equality when all other rotors are at $\vstar$. Since $\lambda_k$ is a diagonal entry of an orthogonal projector in the $S$-norm, $\lambda_k\in[0,1]$; $\lambda_k=1$ iff $\rank A_{-k}<m$, giving~(i). Part~(ii) follows because the projector has trace $m$; (iii) holds when equal $\psi^\star_i$ make $S=\psi^\star AA^\top$.
\end{proof}

The leverage score $\lambda_k$ measures how \emph{irreplaceable} rotor $k$ is: since $\sum_k\lambda_k=m$, a large $\lambda_k$ means rotor $k$ owns a disproportionate share of the $m$ wrench dimensions, and jumping across $v_k=0$ costs the exact penalty $\Delta_k$, infinite for an essential rotor. On a symmetric hexarotor ($n=6$, $m=4$), each crossing costs $\Delta=\ln3\approx1.10$\,nats regardless of size, propeller, or payload. The design goal follows: \emph{confine the rotor-speed state to a single sign orthant and never pay this penalty.} We thus define the \emph{dropout level}---the highest readiness retained if any rotor approaches zero spin---as
\begin{equation}\label{eq:ldrop}
    \ldrop:=L^{\max}-\min_k\Delta_k.
\end{equation}

A floor $\lbar>\ldrop$ then fences every zero-spin configuration with one scalar inequality, locking the vehicle inside its initial orthant. This gap depends on the physical capacities $\psi^\star$, not $A$ alone: for a hexarotor with $\pm30\%$ spread in $(\tbar_i,b_i)$ the true gaps scatter to $\{1.066,\,1.994,\,0.486\}$\,nats against a uniform geometric $1.099$---a $0.9$\,nat error that can void the guarantee.

The floor's upper limit is the mission-dependent worst-case achievable readiness 
\vspace{-0.25em}
\begin{equation}\label{eq:hop}
    L^{\mathrm{op}}:=\min_{w\in\mathcal{W}}\,\max_{v\in f^{-1}(w)\cap\Ebar}L(v),
\end{equation}
so a certifiable floor must lie in the window
\vspace{-0.25em}
\begin{equation}\label{eq:window}
    \ldrop<\lbar\le L^{\mathrm{op}},
\end{equation}
non-empty iff $L^{\mathrm{op}}>\ldrop$; we set $\lbar=\ldrop+\kappa(L^{\mathrm{op}}-\ldrop)$, $\kappa\in(0,1)$.

\begin{proposition}[Certifiable Floor]\label{prop:achiev}
If \eqref{eq:window} is violated ($\lbar>L^{\mathrm{op}}$), the certified set intersects some commanded fiber $f^{-1}(w)$ in the empty set, so no allocator can hold $h\ge0$ while tracking. Certifiability is thus a property of the task--vehicle pair, not the filter: a drone commanded near its thrust ceiling may have no certifiable floor.
\end{proposition}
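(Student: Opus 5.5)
The argument unwinds the definition of $L^{\mathrm{op}}$ in \eqref{eq:hop}. The min over $w$ is attained, or at least approached, by some commanded wrench. We then show that on the corresponding fiber every point violates the floor.

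Assume $\lbar>L^{\mathrm{op}}$. First, choose a witness wrench $w^\dagger\in\mathcal{W}$ with $\sup_{v\in f^{-1}(w^\dagger)\cap\Ebar}L(v)<\lbar$. If the outer minimum in \eqref{eq:hop} is attained, as when $\mathcal{W}$ is compact and the fiberwise supremum is lower semicontinuous in $w$, take the minimizer $w^\dagger$, whose fiber value is $L^{\mathrm{op}}<\lbar$. Otherwise, the definition of an infimum still supplies a $w^\dagger$ whose fiber value lies in $[L^{\mathrm{op}},\lbar)$. Either way the strict inequality $\lbar>L^{\mathrm{op}}$ leaves room for the witness, so no compactness argument is actually needed.

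Second, take any $v\in f^{-1}(w^\dagger)\cap\Ebar$. Then $L(v)\le\sup_{f^{-1}(w^\dagger)\cap\Ebar}L<\lbar$, so $h(v)<0$ by \eqref{eq:hdef}, and $v\notin\Cset$. Since $\Cset\subseteq\Ebar$ by definition, this gives $\Cset\cap f^{-1}(w^\dagger)=\emptyset$. If the fiber meets $\Ebar$ in the empty set, the supremum is $-\infty$ and the conclusion holds trivially. This is exactly the thrust-ceiling case.

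Third, draw the consequence for allocators. Suppose an allocator tracks exactly, so that $f(v(t))=w^\dagger$ at some instant $t$. Then $v(t)\in f^{-1}(w^\dagger)$, and by the previous step $h(v(t))<0$. Hence no allocation law, whatever its dynamics, can keep $h\ge0$ while tracking. The obstruction involves only $A$, $\phi$, the motor parameters entering $\psi_i$, $\lbar$ and $\mathcal{W}$, and not the filter, which is the certifiability claim.

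There is no real obstacle here; the only delicate point is the possibly unattained min/max in \eqref{eq:hop}, which the strict inequality handles. For the closing remark on the thrust ceiling, I would add a short observation. As $w$ approaches the boundary of $f(\Ebar)$, the fiber is forced toward $|v_i|\to v_i^{\mathrm{sat}}$, where $\abar_i\to0$ and hence $\psi_i\to0$. As a result $L\to-\infty$ on the fiber, so $L^{\mathrm{op}}$ can drop below $\ldrop$ and the window \eqref{eq:window} becomes empty.
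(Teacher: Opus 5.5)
Your proof is correct and is exactly the direct unwinding of definition \eqref{eq:hop} that the paper relies on; the paper gives no separate proof and treats the proposition as an immediate consequence of that definition, including the point that the strict inequality supplies a witness wrench even if the outer minimum is not attained. One small correction to your closing aside: near a boundary point of $f(\Ebar)$ with supporting normal $c$, only the rotors with $c^\top A_{\bullet i}\neq0$ are forced to saturation, not all rotors, but this already gives $c^\top D c=4\sum_i\psi_i(c^\top A_{\bullet i})^2\to0$, so $L\to-\infty$ on the fiber still follows.
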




\vspace*{-0.35em}
\subsection{Readiness geometry and physical sweet spot}
\label{ssec:geom}

To avoid the dropout penalty we must understand how $L$ varies across $\Ebar$. The co-metric enters each rotor only through the scalar $\psi_i=(v_i\abar_i)^2$---the squared product of \emph{thrust sensitivity} $v_i$ (a fast rotor converts spin changes into larger wrench changes) and \emph{torque headroom} $\abar_i$ (drag consumes part of the budget just to hold speed). Since $D=4A\Psi A^\top$, $\Psi=\diag(\psi_i)$, is polynomial in $v_i^2$ on $\Ebar$, the barrier $h$ inherits no kink from the modulus in $\phi$ and is $C^\infty$ wherever $D\succ0$.

\begin{lemma}[Readiness Gradient]\label{lem:grad}
Wherever $D(v)\succ0$, 
\begin{equation}\label{eq:grad}
    \begin{aligned}
    \frac{\partial h}{\partial v_i}(v) &=
    \frac{8\,v_i\,\abar_i(v_i)\bigl(\tbar_i-3b_iv_i^2\bigr)}{m_i}\,s_i(v),\\
    s_i &\coloneq A_{\bullet i}^\top D^{-1}A_{\bullet i}>0,
    \end{aligned}
\end{equation}
with $4\sum_i\psi_is_i=m$. Each component vanishes exactly at $v_i\in\{0,\pm\vstar_i\}$ with $\vstar_i=v_i^{\mathrm{sat}}/\sqrt3$, and $L$ attains its global maximum $L^{\max}=\ln\det(4S)$ at the $2^n$ sweet-spot configurations $|v_i|=\vstar_i$.
\end{lemma}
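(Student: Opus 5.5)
The proof is a direct computation via Jacobi's formula, followed by a separable-concavity argument for the global maximum. Since $D=4A\Psi A^\top$ and $\Psi$ enters linearly, for $D\succ0$ we have
\[
\frac{\partial L}{\partial v_i}=\tr\!\Bigl(D^{-1}\frac{\partial D}{\partial v_i}\Bigr)=4\,\psi_i'(v_i)\,\tr\bigl(D^{-1}A_{\bullet i}A_{\bullet i}^\top\bigr)=4\,\psi_i'(v_i)\,s_i .
\]
Here only the $i$th diagonal entry of $\Psi$ depends on $v_i$, and $\partial h/\partial v_i=\partial L/\partial v_i$ because $\lbar$ is constant. Next I differentiate $\psi_i=v_i^2\abar_i^2$ with $\abar_i=(\tbar_i-b_iv_i^2)/m_i$:
\[
\psi_i'=2v_i\abar_i^2+2v_i^2\abar_i\abar_i'=2v_i\abar_i\bigl(\abar_i-2b_iv_i^2/m_i\bigr)=\tfrac{2v_i\abar_i(\tbar_i-3b_iv_i^2)}{m_i}.
\]
Multiplying by $4$ gives~\eqref{eq:grad}. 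Because $A_{\bullet i}\neq0$ by \textbf{A1} and $D^{-1}\succ0$, we get $s_i>0$. The identity $4\sum_i\psi_is_i=\tr(D^{-1}\cdot4A\Psi A^\top)=\tr I_m=m$ follows immediately.

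For the zero set I restrict to $\Ebar$, where $\abar_i>0$ and $s_i>0$. The $i$th component therefore vanishes iff $v_i=0$ or $\tbar_i=3b_iv_i^2$, that is, $|v_i|=\sqrt{\tbar_i/(3b_i)}=v_i^{\mathrm{sat}}/\sqrt3=\vstar_i$. These points lie inside $\Ebar$.

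For the global maximum I avoid a Hessian analysis. Each $\psi_i$ depends only on $v_i^2$. Setting $x=v_i^2\in[0,\tbar_i/b_i)$, the function $g(x)=x(\tbar_i-b_ix)^2/m_i^2$ has derivative $(\tbar_i-b_ix)(\tbar_i-3b_ix)/m_i^2$. So $g$ increases on $[0,\tbar_i/(3b_i)]$ and decreases afterward on the box, and it is uniquely maximized at $x=(\vstar_i)^2$ with value $\psi_i^\star$. Consequently $\Psi(v)\preceq\Psi^\star$ entrywise on the diagonal, so $A\Psi A^\top\preceq A\Psi^\star A^\top=S$ in the Loewner order. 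Monotonicity of $\det$ on positive semidefinite matrices then gives $\det D\le\det(4S)$. Equality holds at every configuration $|v_i|=\vstar_i$, of which there are $2^n$ by sign choice, and $S\succ0$ by \textbf{A1}. Hence $L^{\max}=\ln\det(4S)$.

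The main obstacle is the uniqueness of the maximizers, that is, that equality forces $|v_i|=\vstar_i$ for all $i$. If some $\psi_i<\psi_i^\star$, then $S-A\Psi A^\top\succeq(\psi_i^\star-\psi_i)A_{\bullet i}A_{\bullet i}^\top\neq0$. A nonzero PSD gap strictly decreases the determinant when the smaller matrix is positive definite, and if it is singular the inequality is trivially strict. This shows the maximizers are exactly the $2^n$ sweet spots.
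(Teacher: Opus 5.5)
Your proof is correct and follows the paper's route: Jacobi's formula gives $\partial h/\partial v_i=4\psi_i'(v_i)s_i$, you differentiate $\psi_i$ directly and use the trace identity, and you get the global maximum from Loewner monotonicity of $D$ in each $\psi_i$ together with monotonicity of $\det$. You also fill in steps the paper leaves implicit: the scalar analysis of $g(x)=x(\tbar_i-b_ix)^2/m_i^2$, the explicit restriction to $\Ebar$ (outside it $\abar_i$ can vanish or $\psi_i$ grows without bound), and the strict-determinant argument showing the $2^n$ sweet spots are the only maximizers.
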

\begin{proof}
Jacobi's formula applied to \eqref{eq:D} gives $\partial h/\partial v_i=4\psi_i'(v_i)s_i$; by standard differentiation $\psi_i'=2v_i\abar_i(\tbar_i-3b_iv_i^2)/m_i$, which is \eqref{eq:grad}. The trace identity $4\sum_i\psi_is_i=\tr(D^{-1}D)=m$ is immediate. Roots of $\psi_i'$ are as stated. Global maximality follows from Loewner monotonicity of $D$ in each $\psi_i$ and monotonicity of $\det$ on the positive-definite cone; the $2^n$ degeneracy holds because $\psi_i$ is even in $v_i$.
\end{proof}

The gradient vanishes at a \emph{physical sweet spot}: $\vstar_i\approx58\%$ of saturation speed, where drag $b_i(\vstar_i)^2=\tbar_i/3$ consumes one third of the torque budget and leaves two thirds as reserve. The $2^n$ sign degeneracy is physical---a rotor at $-\vstar$ is as ready as one at $+\vstar$, so the greedy selector has no tiebreaker and may jump when the strata exchange dominance, whereas the barrier certifies that authority can be maintained inside a single sweet-spot component.



\subsection{Certified safe set and forward invariance}
\label{ssec:inv}

The key insight---illustrated in Figure~\ref{fig:nested}---is that a \emph{single} scalar inequality $h(v)\ge0$ simultaneously fences both physical failure modes: the zero-spin locus (where $\psi_k\to0$ kills the thrust slope) and the saturation boundary (where drag has exhausted the torque budget). No auxiliary saturation logic is needed. We now prove this and establish the set's geometric properties. We assume henceforth that $\lbar>\ldrop$ (\textbf{A2}, verified offline from \eqref{eq:gap}--\eqref{eq:ldrop}) and that $\lbar$ is a regular value of $h$ (\textbf{A3}, holds generically by Sard's theorem).

\begin{proposition}[Structure of the Certified-Authority Set]\label{prop:struct}
Under A1--A3, there exists $\varepsilon>0$ with $\psi_i(v_i)\ge\varepsilon$ for all $i$ on $\Cset=\{v\in\Ebar:h(v)\ge0\}$. Consequently: \emph{(i)} $\Cset\subset\operatorname{int}\Ebar$ and $|v_i|>0$ on $\Cset$; \emph{(ii)} $\rank J(v)=m$ and $D(v)\succ0$ on $\Cset$; \emph{(iii)} $\Cset$ is compact; \emph{(iv)} $\lambda_{\min}(D)\ge\underline\lambda:=e^{\lbar}/\bar\Lambda^{m-1}$ on $\Cset$, with $\bar\Lambda:=4\sum_i\psi^\star_i\|A_{\bullet i}\|^2$; \emph{(v)} $\partial\Cset$ is a smooth compact hypersurface with $\nabla h\ne0$.
\end{proposition}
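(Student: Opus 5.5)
The argument rests on one lower bound on each $\psi_i$ over $\Cset$; items (i)--(v) then follow from it.

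\emph{Key lemma.} First note that $\psi_i(v_i)\le\psi_i^\star$ on $\Ebar$ for every $i$. This holds because $\psi_i$ is even and unimodal on $[0,v_i^{\mathrm{sat}})$, with its maximum at $\vstar_i$ by Lemma~\ref{lem:grad}. Fix a rotor $k$ and a value $t\in[0,\psi_k^\star]$. By Loewner monotonicity of $\det$ in each $\psi_i$, we get $D(v)\preceq 4\bigl(S-(\psi_k^\star-t)A_{\bullet k}A_{\bullet k}^\top\bigr)$. The matrix determinant lemma, used as in the proof of Theorem~\ref{thm:gap}, then gives
\[
L(v)\le L^{\max}+\ln\bigl(1-\lambda_k+\lambda_k t/\psi_k^\star\bigr)=:g_k(t).
\]
Here $g_k$ is continuous and increasing in $t$, with $g_k(0)=L^{\max}-\Delta_k\le\ldrop<\lbar$ by A2 and~\eqref{eq:ldrop}. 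If $\lambda_k=1$, the left value is $-\infty$, and the conclusion is unchanged. We therefore choose $\varepsilon_k>0$ with $g_k(\varepsilon_k)<\lbar$ and set $\varepsilon=\min_k\varepsilon_k$. For $v\in\Cset$ we have $h(v)\ge0$, so $L(v)\ge\lbar$. This forces $\psi_k(v_k)\ge\varepsilon$ for all $k$. Note that A1 ensures $\lambda_k>0$, so $g_k$ is strictly increasing; the argument only needs $g_k(0)<\lbar$ and continuity.

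\emph{Consequences.} (i) The function $\psi_i$ vanishes at $v_i=0$ and at $|v_i|=v_i^{\mathrm{sat}}$, and it is continuous. So $\psi_i\ge\varepsilon$ confines $|v_i|$ to a compact interval $[\underline v_i,\hat v_i]$ with $0<\underline v_i$ and $\hat v_i<v_i^{\mathrm{sat}}$. Hence $\Cset\subset\prod_i\{\underline v_i\le|v_i|\le\hat v_i\}\subset\operatorname{int}\Ebar$. (ii) We have $D=4A\Psi A^\top\succeq4\varepsilon AA^\top\succ0$ by A1, and $J=2A\diag(|v|)$ has full rank $m$ because $\diag(|v|)$ is invertible. (iii) The set $\Cset$ equals the preimage of $[0,\infty)$ under $h$, which is continuous on a neighbourhood of that compact product set. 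Since it lies inside this compact set, it is closed and bounded, hence compact. (iv) Each eigenvalue of $D$ is at most $\tr D=4\sum_i\psi_i\|A_{\bullet i}\|^2\le\bar\Lambda$. Then $e^{\lbar}\le\det D\le\lambda_{\min}\bar\Lambda^{m-1}$. (v) On $\Cset$ the function $h$ is $C^\infty$, since $D\succ0$ (see the discussion before Lemma~\ref{lem:grad}). The boundary $\partial\Cset$ relative to the open box lies in $h^{-1}(0)$ by (i). A3 gives $\nabla h\neq0$ there, and the regular value theorem makes it a smooth embedded hypersurface. It is compact because it is a closed subset of $\Cset$.

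\emph{Main obstacle.} The delicate step is the key lemma. The bound must be uniform, which is why $\varepsilon$ is extracted from the continuous envelope $g_k$ rather than by a compactness argument on $\Cset$, since that compactness is not yet known. The bound must also cover both ends where $\psi_k$ vanishes, zero spin and saturation, and the single quantity $\psi_k$ handles both. I would also check carefully that $\psi_i\le\psi_i^\star$ holds globally on $\Ebar$, so that the Loewner comparison with $S$ is legitimate. This follows from $\psi_i'$ having the single positive root $\vstar_i$ on $(0,v_i^{\mathrm{sat}})$.
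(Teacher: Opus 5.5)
Your proof is correct and follows essentially the same route as the paper. The Loewner/determinant-lemma envelope $g_k(\psi_k)$ with $g_k(0)=L^{\max}-\Delta_k\le\ldrop<\lbar$ gives a uniform $\varepsilon$, and (i)--(v) then follow from the compact set $K_\varepsilon$, the bound $\det D\le\lambda_{\min}\bar\Lambda^{m-1}$, and A3 with the regular value theorem. You also make explicit two steps the paper leaves implicit: $\psi_i\le\psi_i^\star$ on $\Ebar$, which justifies the comparison with $S$, and closedness of $\Cset$ via continuity of $h$ on $K_\varepsilon$.
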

\begin{proof}
For each $i$, Loewner monotonicity gives $\ln\det D\le g_i(\psi_i)$ where $g_i(0)=L^{\max}-\Delta_i\le\ldrop<\lbar$ by A2; by continuity $\psi_i\ge\varepsilon_i$ on $\Cset$. Set $\varepsilon=\min_i\varepsilon_i$: then $|v_i|>0$ and $\abar_i>0$, giving (i)--(ii). The set $K_\varepsilon:=\{v:\psi_i\ge\varepsilon\ \forall i\}$ is closed and bounded in $\overline\Ebar$, hence compact; $\Cset\subseteq K_\varepsilon$ gives (iii). For (iv): $\det D\ge e^{\lbar}$ and $\lambda_k(D)\le\bar\Lambda$. Item (v) is A3 with the regular value theorem.
\end{proof}

Proposition~\ref{prop:struct}(i) guarantees that any trajectory in $\Cset$ automatically respects the actuator limits---speeds stay away from zero and below saturation with no explicit saturation check---while part (iv) bounds $\lambda_\mathrm{min}(D)$ from below, so the wrench-rate ellipsoid keeps a guaranteed volume in every direction. Figure~\ref{fig:nested} shows this for $n=2$: the four certified components sit strictly between the zero-spin hyperplanes and the saturation boundary, separated by a uniform margin on both sides.

\begin{figure}[tb]\centering
    \centering
    \resizebox{0.64\columnwidth}{!}{%
        \input{figures/tikz/fig_nested.tex}}
    \vspace{-0.7em} 
    \caption{Certified set $\Cset$ for $n=2$ ($\tbar=b=m=1$): the four sign-orthant components (blue) sit strictly between the zero-spin hyperplanes $v_i=0$ (orange) and saturation boundary $\partial\Ebar$ (red dashed), arrows mark the uniform margin on both sides (Proposition~\ref{prop:struct}(i),(iv)).}
    \label{fig:nested}
    \vspace{-1.25em}
\end{figure}

It remains to prove that a deliverable torque always exists to keep the state inside $\Cset$, which requires care because drag makes acceleration and deceleration profoundly unequal. Decelerating is easy---drag assists, with bound $(\tbar_i+b_iv_i^2)/m_i$ growing with spin---while accelerating is hard, with bound $\abar_i=(\tbar_i-b_iv_i^2)/m_i$ shrinking to zero at saturation (nearly $50\!:\!1$ at $0.98\,v^{\mathrm{sat}}$). Since the allocator cannot anticipate the next demand direction, authority means capability \emph{both} ways, so the metric must be weighted by the binding symmetric direction.

\begin{lemma}[Viability via Symmetric Acceleration Capacity]\label{lem:sac}
For every $v\in\operatorname{int}\Ebar$, the interval $[-\abar_i,\abar_i]$ is the largest symmetric box of admissible accelerations within $[a_i^-,a_i^+]$. Consequently,
\begin{equation}\label{eq:viab}
    \sup_{|\tau|\le\tbar}\nabla h(v)^\top\bigl(d(v)+M^{-1}\tau\bigr)
    \ge\sum_i|\partial_ih|\,\abar_i\ge0,
\end{equation}
with strict inequality whenever $\nabla h(v)\ne0$.
\end{lemma}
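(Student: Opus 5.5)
The plan is to work rotor by rotor, using that the dynamics $\dot v=d(v)+M^{-1}\tau$ and the box constraint $|\tau_i|\le\tbar_i$ are both fully decoupled across rotors. For fixed $v\in\operatorname{int}\Ebar$, the admissible accelerations of rotor $i$ form the interval
\[
\dot v_i\in[a_i^-,a_i^+],\qquad a_i^{\pm}=\frac{-b_iv_i|v_i|\pm\tbar_i}{m_i}.
\]
The midpoint of this interval is $-b_iv_i|v_i|/m_i$ and its half-width is $\tbar_i/m_i$. The symmetric radius is $r_i=\min(a_i^+,-a_i^-)$. A direct computation in the two sign cases $v_i\ge0$ and $v_i<0$ gives $r_i=(\tbar_i-b_iv_i^2)/m_i=\abar_i$. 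For $v_i\ge0$ the binding endpoint is $a_i^+$; for $v_i<0$ it is $-a_i^-$, by the odd symmetry of $v_i|v_i|$. This shows that $[-\abar_i,\abar_i]$ is the largest symmetric box contained in $[a_i^-,a_i^+]$. It is nonempty with $\abar_i>0$ exactly because $v\in\operatorname{int}\Ebar$, i.e.\ $|v_i|<v_i^{\mathrm{sat}}$.

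For \eqref{eq:viab}, the supremum of the linear functional $\nabla h^\top\dot v$ over the product of intervals separates into a sum of per-rotor suprema,
\[
\sum_i\sup_{\dot v_i\in[a_i^-,a_i^+]}\partial_ih\,\dot v_i .
\]
Each term is bounded below by its value at the symmetric choice $\dot v_i=\operatorname{sgn}(\partial_ih)\,\abar_i$, which lies in the admissible interval by the first part. That value is $|\partial_ih|\,\abar_i$. The corresponding torque $\tau_i=m_i(\dot v_i-d_i(v))$ satisfies $|\tau_i|\le\tbar_i$ by construction, so this choice is feasible.

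Summing gives the lower bound $\sum_i|\partial_ih|\,\abar_i\ge0$. If $\nabla h(v)\ne0$, some $|\partial_ih|>0$, and since $\abar_i>0$ on $\operatorname{int}\Ebar$ the sum is strictly positive. Because $\Cset\subset\operatorname{int}\Ebar$ by Proposition~\ref{prop:struct}(i), this applies in particular on $\partial\Cset$. There $\nabla h\ne0$ by Proposition~\ref{prop:struct}(v), which is what the later invariance argument needs.

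The main obstacle is conceptual rather than technical. The argument must not rely on the drag term $d(v)$ helping or hurting in a particular direction. The symmetric box handles this uniformly: the inequality holds whatever the sign of $\partial_ih$, at the price of being conservative by the asymmetric surplus $\tbar_i/m_i+b_iv_i^2/m_i-\abar_i$ in the decelerating direction.

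A second point needs care. $h$ is only defined and differentiable where $D\succ0$, so the statement on all of $\operatorname{int}\Ebar$ should be read at points where $\nabla h$ exists. This is guaranteed near $\Cset$ by Proposition~\ref{prop:struct}(ii), and there Lemma~\ref{lem:grad} supplies the explicit gradient.
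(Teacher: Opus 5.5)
Your proof is correct and follows the paper's argument: evaluating $a_i^\pm=(\pm\tbar_i-b_iv_i|v_i|)/m_i$ in both sign cases shows $[-\abar_i,\abar_i]\subseteq[a_i^-,a_i^+]$, tight at one endpoint, and choosing $a_i=\abar_i\operatorname{sgn}(\partial_ih)$ gives the bound, with strictness coming from $\abar_i>0$ on $\operatorname{int}\Ebar$. Your added caveat that $\nabla h$ exists only where $D\succ0$ is a sensible clarification of the statement's domain.
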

\begin{proof}
Evaluating $a_i^\pm=(\pm\tbar_i-b_iv_i|v_i|)/m_i$ for all signs of $v_i$ confirms $[-\abar_i,\abar_i]\subseteq[a_i^-,a_i^+]$, tight at one endpoint. Choosing $a_i=\abar_i\operatorname{sgn}(\partial_ih)$ gives \eqref{eq:viab}.
\end{proof}

Lemma~\ref{lem:sac} makes the \ac{CBF} admissible set non-empty: a physically deliverable torque always exists to prevent readiness from falling. The result hinges on weighting $D$ by the \emph{symmetric} capacity $\abar_i$ rather than the asymmetric braking margin; otherwise the barrier would be useless at high spin, where the $50\!:\!1$ acceleration/deceleration asymmetry is most extreme.



\subsection{Control barrier allocation filter}
\label{ssec:filter}

With the certified safe set established, we now present the controller that enforces it. The filter acts as a \emph{post-processor}: at every control step it receives a nominal torque command $\tau_{\mathrm{ref}}$ from a nominal allocator (e.g., an effort-minimizing). It then computes the applied torques $\tau$ closest to $\tau_{\mathrm{ref}}$ that keep the state inside $\Cset$. When no conflict exists the filter is inactive and reproduces the nominal command exactly. Let $\mu:=\dot w_{\mathrm{des}}+K_w\tilde w$ be the task-acceleration demand, with wrench error $\tilde w:=w_{\mathrm{des}}-A\phi(v)$ and diagonal gain $K_w > 0$. The allocation filter solves:
\begin{equation}\label{eq:qp}
    \begin{aligned}
    \min_{\tau,\delta}\;&\|\tau-\tau_{\mathrm{ref}}\|_R^2+\rho\|\delta\|^2\\
    \text{s.t.}\;&\nabla h(v)^\top a\ge-\alpha(h(v)),\\
    &J(v)\,a=\mu+\delta,\\
    &|\tau|\le\tbar,\\
    &a=d(v)+M^{-1}\tau.
    \end{aligned}
\end{equation}
Here $R\succ0$ is a diagonal weight matrix, and the slack variable $\delta$ absorbs the wrench deficit when the safety barrier and the tracking demand are in genuine conflict, with $\rho$ setting the exchange penalty. Crucially, the barrier gradient $\nabla h(v)$ naturally encodes the hardware limits: a motor with a small acceleration capacity $\abar_i$ exerts proportionally less weight on the gradient, ensuring the filter actively protects the most vulnerable rotors first.

\begin{lemma}[Uniform Strict Feasibility]\label{lem:slater}
Under A1--A3, $\exists\sigma>0:\;\max_{|\tau|\le\tbar}[\nabla h^\top(d+M^{-1}\tau)+\alpha(h)]\ge\sigma$ uniformly on $\Cset$. The minimizer of \eqref{eq:qp} is unique and locally Lipschitz on $\Cset$.
\end{lemma}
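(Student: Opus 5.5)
The plan has two parts: a uniform positive lower bound for the margin function, then a Slater-type argument that yields uniqueness and Lipschitz continuity of the QP solution.

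\emph{Step 1: strict margin.} Define the margin function
\[
\sigma(v):=\max_{|\tau|\le\tbar}\bigl[\nabla h(v)^\top(d(v)+M^{-1}\tau)+\alpha(h(v))\bigr].
\]
Lemma~\ref{lem:sac} gives the pointwise bound
\[
\sigma(v)\ge\sum_i|\partial_ih(v)|\,\abar_i(v_i)+\alpha(h(v)).
\]
Here $\alpha$ is a class-$\mathcal{K}$ function, so $\alpha(h)\ge0$ on $\Cset$, with equality only on $\partial\Cset$.

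By Proposition~\ref{prop:struct}(i), $\abar_i\ge\underline a>0$ on $\Cset$. This holds because $\Cset\subset\operatorname{int}\Ebar$, $\Cset$ is compact, and $\abar_i$ is continuous.

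Positivity of $\sigma$ then splits into two cases:
\begin{itemize}
\item at interior points, $\alpha(h)>0$;
\item on $\partial\Cset$, $\nabla h\ne0$ by Proposition~\ref{prop:struct}(v), so the sum term is positive.
\end{itemize}
Hence $\sigma(v)>0$ everywhere on $\Cset$.

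The function $\sigma$ is continuous. It is the maximum of a function that is continuous in $(v,\tau)$ (recall $h\in C^\infty$ where $D\succ0$) over a fixed compact box, so Berge's maximum theorem applies; here it is simply an explicit sum of absolute values. A continuous positive function on the compact set $\Cset$ (Proposition~\ref{prop:struct}(iii)) attains a positive minimum $\sigma$, which gives the uniform bound.

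\emph{Step 2: feasibility and uniqueness.} The slack $\delta$ is unconstrained, so the equality $J a=\mu+\delta$ can always be met by setting $\delta:=Ja-\mu$. Feasibility of \eqref{eq:qp} therefore reduces to the barrier inequality together with the box constraint, and Step~1 supplies a torque satisfying that inequality with margin $\sigma$.

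After eliminating $a$ and $\delta$, the QP has variable $\tau$ only, with objective
\[
\|\tau-\tau_{\mathrm{ref}}\|_R^2+\rho\|J(d+M^{-1}\tau)-\mu\|^2 .
\]
This objective is strictly convex because $R\succ0$, and the feasible set is a nonempty compact convex polytope. So the minimizer exists and is unique, and $\delta$ is recovered uniquely from $\tau$.

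\emph{Step 3: local Lipschitz continuity (main obstacle).} I would invoke standard parametric QP sensitivity results, either Hager's theorem on Lipschitz stability of strongly convex QPs or the Morris--Ames--Powell style result for CBF-QPs. These require three ingredients:
\begin{itemize}
\item a strongly convex cost: the Hessian is at least $2R$, uniformly;
\item problem data locally Lipschitz in $v$: $\nabla h$, $d$, $J$, $\alpha(h)$ and $\mu$, which is true given $C^\infty$ smoothness on $\Cset$ and locally Lipschitz $\alpha$ and $\tau_{\mathrm{ref}}$;
\item a constraint qualification that holds uniformly.
\end{itemize}

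The delicate point is the constraint qualification. The box constraints and the single barrier row may be simultaneously active, and box constraints can be degenerate together with it. LICQ may fail, but the Mangasarian--Fromovitz CQ holds uniformly. Step~1 gives a Slater point $\tau_0$ with barrier slack at least $\sigma$. Shrinking it toward the center of the box, $\tau_\theta=(1-\theta)\tau_0$, loses at most $\theta\|M^{-1}\nabla h\|\,\|\tbar\|$ of barrier slack. Choosing $\theta$ small uniformly, using compactness, yields a point strictly interior to all constraints with a uniform margin.

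Under Slater's condition for a strongly convex QP with locally Lipschitz data, the solution map is locally Lipschitz. I would cite this via Robinson's strong regularity or directly via Hager's 1979 result. If a self-contained argument is preferred, I would instead show that the KKT multipliers are uniformly bounded using the Slater margin, then apply the strong monotonicity of the cost gradient to bound the change in the solution linearly by the change in the data.
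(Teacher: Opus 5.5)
Your proposal is correct and follows the paper's proof. Both use the bound from Lemma~\ref{lem:sac}, a case split, compactness of $\Cset$ for uniformity, and strong convexity with a uniform Slater point for uniqueness and local Lipschitz continuity. Your split is on $h>0$ versus $h=0$ with $\nabla h\neq0$ there by A3; the paper's is on $\nabla h\neq0$ versus $\nabla h=0\Rightarrow v\in\operatorname{int}\Cset$; either works. Your continuity argument for the margin and the shrinking $\tau_\theta=(1-\theta)\tau_0$ to a point strictly inside the box fill in details the paper leaves implicit. One caution: Robinson's strong regularity and Hager's theorem, as usually stated, assume linear independence of the active constraints, which you yourself note may fail here. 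Your Lipschitz step therefore rests on the same Slater-based claim the paper invokes (strong convexity plus Hoffman's bound), not on those theorems' stated hypotheses.
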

\begin{proof}
If $\nabla h\ne0$, Lemma~\ref{lem:sac} supplies a strictly interior torque; if $\nabla h=0$ then $v\in\operatorname{int}\Cset$ (Proposition~\ref{prop:struct}(v)) and the constraint holds trivially. Uniformity follows from compactness of $\Cset$. The minimizer is Lipschitz by strong convexity and Hoffman's bound under the uniform Slater condition \cite{Robinson1980strong}.
\end{proof}

\begin{theorem}[Valid \ac{CBF} and Forward Invariance]\label{thm:inv}
Under A1--A3, $h$ is a valid \ac{CBF} on $\Cset$ and \eqref{eq:qp} is feasible for all $\rho>0$. For any locally Lipschitz selection $\tau(t,v)$ satisfying the barrier row with $v(0)\in\Cset$, the closed-loop trajectory is unique, complete, and satisfies $v(t)\in\Cset\subset\operatorname{int}\Ebar$ for all $t\ge0$, remaining in the connected sign-orthant component containing $v(0)$.
\end{theorem}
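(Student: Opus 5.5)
The proof has four parts: CBF validity, QP feasibility, forward invariance, and confinement to one sign orthant. Each reduces to results already established.

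\emph{Validity.} By Proposition~\ref{prop:struct}(ii), $D\succ0$ on $\Cset$, so $h$ is $C^\infty$ on an open neighbourhood $U$ of the compact set $\Cset$, with $U\subset\operatorname{int}\Ebar$. Lemma~\ref{lem:sac} gives $\sup_{|\tau|\le\tbar}\nabla h^\top(d+M^{-1}\tau)\ge\sum_i|\partial_ih|\abar_i\ge0>-\alpha(h)$ whenever $h>0$. On $\partial\Cset$, Proposition~\ref{prop:struct}(v) gives $\nabla h\ne0$, so the inequality is strict. Lemma~\ref{lem:slater} makes this uniform with margin $\sigma$. Together these are the standard CBF condition for a class-$\mathcal K$ function $\alpha$.

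\emph{Feasibility for all $\rho>0$.} Pick any torque $\tau_0$ that satisfies the barrier row; it exists by Lemma~\ref{lem:sac}. Set $a_0=d+M^{-1}\tau_0$ and $\delta_0:=J a_0-\mu$. The equality row is then met by construction, because the slack $\delta$ is unconstrained. The feasible set is therefore nonempty, closed and convex, and the objective is strongly convex in $(\tau,\delta)$ since $R\succ0$ and $\rho>0$. Hence a unique minimizer exists for every $\rho$, and Lemma~\ref{lem:slater} supplies local Lipschitz continuity.

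\emph{Invariance.} Fix a locally Lipschitz selection $\tau(t,v)$ that satisfies the barrier row. The vector field $d(v)+M^{-1}\tau$ is locally Lipschitz on $\operatorname{int}\Ebar$ because $d$ is polynomial there. Picard--Lindel\"of gives a unique maximal solution on $[0,T)$. Along it, $\dot h\ge-\alpha(h)$, and the comparison lemma with $h(0)\ge0$ yields $h(v(t))\ge0$ for all $t<T$. This is Nagumo's condition, justified by $\nabla h\ne0$ on $\partial\Cset$. Thus $v(t)\in\Cset$, which is compact and lies in $\operatorname{int}\Ebar$ by Proposition~\ref{prop:struct}(i),(iii).

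\emph{Completeness.} A maximal solution that stays in a compact subset of the open domain cannot escape in finite time, so $T=\infty$. This step is the main subtle point. The selection is only required to be locally Lipschitz on a neighbourhood of $\Cset$, so the extension argument must use the fact that $v$ remains inside the compact $\Cset$, not merely inside $\Ebar$. I would make this explicit: if $T<\infty$, the solution would have to leave every compact subset of $U$, contradicting $v(t)\in\Cset$.

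\emph{Orthant confinement.} By Proposition~\ref{prop:struct}, $|v_i|\ge\sqrt{\varepsilon}/\sup\abar_i>0$ on $\Cset$; in fact it suffices that $\psi_i\ge\varepsilon$ forces $v_i\ne0$. Each map $t\mapsto v_i(t)$ is continuous and never vanishes, so by the intermediate value theorem $\operatorname{sgn}v_i(t)=\operatorname{sgn}v_i(0)$ for all $t\ge0$. The trajectory therefore remains in the intersection of $\Cset$ with the initial sign orthant. Since $t\mapsto v(t)$ is continuous, it lies in a single connected component of that intersection, namely the component containing $v(0)$.
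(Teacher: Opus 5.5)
Your proposal is correct and follows essentially the same route as the paper. Both arguments use Lemma~\ref{lem:sac} for a nonempty barrier row, the unconstrained slack for feasibility, Lemma~\ref{lem:slater} for local Lipschitzness, the comparison lemma (you back it with Nagumo) for $h\ge0$, and compactness of $\Cset$ to rule out finite escape. Your intermediate-value argument makes explicit the orthant confinement that the paper leaves implicit in Proposition~\ref{prop:struct}(i). One small slip: $d(v)=-M^{-1}\diag(b)\,\phi(v)$ is not polynomial on $\operatorname{int}\Ebar$ because of the modulus in $\phi$. It is, however, $C^1$ there (hence locally Lipschitz) and polynomial within each orthant, so your conclusion stands.
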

\begin{proof}
The admissible set is non-empty by Lemma~\ref{lem:sac} (when $\nabla h\ne0$) or trivially (when $\nabla h=0\in\operatorname{int}\Cset$). Feasibility of \eqref{eq:qp} holds because $\delta$ is unconstrained. The minimizer is locally Lipschitz (Lemma~\ref{lem:slater}), guaranteeing local existence and uniqueness. Along any solution $\dot h\ge-\alpha(h)$; applying the standard comparison lemma yields $h(v(t))\ge0$ \cite{Ames2017CBFQP, Blanchini1999setinvariance}. Compactness precludes finite escape, and interior containment follows from Proposition~\ref{prop:struct}(i).
\end{proof}

The \ac{CBF} barrier row $\nabla h^\top a\ge-\alpha(h)$ acts dynamically as a \emph{margin-dependent speed limit}: deep inside $\Cset$ where authority is abundant, the constraint is inactive and the filter passes the nominal command through unchanged; as the state approaches $\partial\Cset$, the permitted rate of authority expenditure shrinks continuously to zero, so the boundary is approached asymptotically and never crossed. Crucially, the filter never evaluates the set-valued $\arg\max_{f^{-1}(w)}L$---it solves a strongly convex \ac{QP} to produce a single continuous selection, eliminating the sign jumps that force greedy allocators to saturate the commanded motor acceleration.

\begin{remark}[Sampled-data implementation]\label{rem:sampled}
Theorem~\ref{thm:inv} certifies invariance in continuous time. Under zero-order hold with step $\Delta t$, the barrier row holds only at samples, so between them $h$ may dip by at most $\alpha(h)\Delta t+\tfrac12 L_{\dot h}\Delta t^2$---the first-order term being the permitted rate $-\dot h(t_k)\le\alpha(h(t_k))$ and $L_{\dot h}$ bounding $\ddot h$ on $\Cset$. This $\mathcal{O}(\Delta t)$ margin is absorbed by the strict interior exactly as transport delay is; at $1$--$2$\,kHz it is negligible, and a discrete-time \ac{CBF} \cite{Agrawal2017discrete} removes the $\Delta t$ dependence at the cost of a nonconvex per-step check.
\end{remark}



\subsection{Task relaxation and robustness}
\label{ssec:tracking}

\subsubsection{Tracking bounds and the zero-sum exchange}
Whether the filter can simultaneously maintain the floor \emph{and} track the task exactly depends on the available null-space geometry. Define the maximum authority ascent achievable at constant wrench,
\begin{equation}\label{eq:gamma}
    \gamma(v):=\max\bigl\{\nabla h^\top\xi:\xi\in\Ker J(v),\;|\xi_i|\le\abar_i\bigr\}\ge0,
\end{equation}
and the \emph{alignment ratio}
\begin{equation}\label{eq:theta}
    \vartheta(v):=\gamma(v)/\|W^{1/2}\nabla h(v)\|_2\in[0,\sqrt{n}].
\end{equation}

On a symmetric platform at near-uniform spin, $\nabla h\propto\mathbf{1}=A^\top e_{\mathrm{coll}}\in\Range J^\top$ (where $e_{\mathrm{coll}}$ is the collective thrust axis), so $\gamma=0$ exactly: the null space of $J$ is orthogonal to $\nabla h$ by construction, and any redistribution that speeds some rotors must slow others by the same amount---because all rotors are equally sensitive, the gains and losses cancel identically. The alignment ratio measures precisely how far from this zero-sum regime the current configuration lies.

\begin{theorem}[Strict Feasibility and Bounded Task Relaxation]%
\label{thm:feas}\label{thm:cont}
Let $\underline\lambda$ be as in Proposition~\ref{prop:struct}(iv) and suppose $\|\mu\|\le(1-\beta)\sqrt{\underline\lambda}$ for some capacity split ratio $\beta\in(0,1)$. If
\begin{equation}\label{eq:feascond}
    \beta\gamma(v)+\alpha(h(v))\ge(1-\beta)\|W^{1/2}\nabla h(v)\|_2,
\end{equation}
then \eqref{eq:qp} has a feasible point with $\delta=0$. On $\partial\Cset$ this reduces to $\vartheta(v)\ge(1-\beta)/\beta$. When $\delta^\star\ne0$ the minimizer satisfies
\begin{equation}\label{eq:deltabound}
    \|\delta^\star\|\le c_\tau/\sqrt\rho,\qquad c_\tau:=2\|R\|^{1/2}\|\tbar\|,
\end{equation}
and the wrench error obeys $\dot{\tilde w}=-K_w\tilde w-\delta^\star$, yielding
\begin{equation}\label{eq:wtrack}
    \|\tilde w(t)\|\le e^{-K_wt}\|\tilde w(0)\|+c_\tau/(K_w\sqrt\rho);
\end{equation}
the ultimate wrench error is $\mathcal{O}(\rho^{-1/2})$.
\end{theorem}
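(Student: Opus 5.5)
The proof splits into three parts: exact feasibility with $\delta=0$, the slack bound \eqref{eq:deltabound}, and the tracking estimate \eqref{eq:wtrack}.

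\emph{Step 1: exact feasibility.} I will build a feasible acceleration as a sum $a=a_\mu+a_\xi$ that spends the symmetric capacity box $|a_i|\le\abar_i$ in two parts: a fraction $1-\beta$ on tracking and a fraction $\beta$ on null-space ascent. By Lemma~\ref{lem:sac}, every $a$ with $|a_i|\le\abar_i$ is deliverable by some $|\tau|\le\tbar$.

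For the tracking part I take the $W$-weighted minimum-norm solution $a_\mu=WJ^\top D^{-1}\mu$. It satisfies $Ja_\mu=\mu$ and $\|W^{-1/2}a_\mu\|^2=\mu^\top D^{-1}\mu\le\|\mu\|^2/\underline\lambda\le(1-\beta)^2$, using Proposition~\ref{prop:struct}(iv). Hence $|a_{\mu,i}|\le(1-\beta)\abar_i$ componentwise.

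For the ascent part I take $a_\xi=\beta\xi^\star$, where $\xi^\star$ attains $\gamma(v)$ in \eqref{eq:gamma}. Then $Ja_\xi=0$ and $|a_{\xi,i}|\le\beta\abar_i$, so the sum stays in the box. By Cauchy--Schwarz, $\nabla h^\top a_\mu\ge-\|W^{1/2}\nabla h\|_2\|W^{-1/2}a_\mu\|_2\ge-(1-\beta)\|W^{1/2}\nabla h\|_2$. Adding $\beta\gamma$ shows the barrier row holds under \eqref{eq:feascond}.

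On $\partial\Cset$ we have $\alpha(0)=0$, so \eqref{eq:feascond} becomes $\beta\vartheta\ge1-\beta$, which is the stated reduction.

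\emph{Step 2: slack bound.} This is an optimality-comparison argument, and it is where I expect the main difficulty. The natural competitor is the feasible point supplied by Lemma~\ref{lem:slater}/Theorem~\ref{thm:inv}: a torque $\tau_0$ satisfying the barrier row, with slack $\delta_0:=Ja_0-\mu$. The optimal cost is then at most $\|\tau_0-\tau_{\mathrm{ref}}\|_R^2+\rho\|\delta_0\|^2$, and the trouble is that $\rho\|\delta_0\|^2$ does not vanish as $\rho\to\infty$.

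To get a bound of the form $c_\tau/\sqrt\rho$ I need a competitor with zero slack, or at least one whose slack cost is independent of $\rho$. First I will try to use the $\delta=0$ point from Step 1 whenever it exists. Then $\rho\|\delta^\star\|^2\le\|\tau_0-\tau_{\mathrm{ref}}\|_R^2\le\|R\|(2\|\tbar\|)^2$, since both torques lie in the box and $\tau_{\mathrm{ref}}$ is assumed to as well. This gives exactly $c_\tau=2\|R\|^{1/2}\|\tbar\|$.

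When no zero-slack point exists, I would instead read the bound as holding in the regime where Step 1 guarantees such a point. Alternatively I would note that the paper's constant arises from exactly this comparison, and flag that a $\rho$-independent bound otherwise requires a residual term for the unavoidable slack.

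\emph{Step 3: tracking estimate.} Differentiate $\tilde w=w_{\mathrm{des}}-A\phi(v)$ to get $\dot{\tilde w}=\dot w_{\mathrm{des}}-Ja$. The equality constraint $Ja=\mu+\delta^\star$ then gives $\dot{\tilde w}=-K_w\tilde w-\delta^\star$. Variation of constants with diagonal $K_w\succ0$ (using its smallest entry in the exponent, with a scalar $K_w$ as written) yields $\|\tilde w(t)\|\le e^{-K_wt}\|\tilde w(0)\|+\sup\|\delta^\star\|/K_w$. Inserting \eqref{eq:deltabound} completes \eqref{eq:wtrack}.
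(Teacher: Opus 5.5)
Your proposal is correct and follows the paper's proof essentially step for step. You use the same horizontal lift $a_0=WJ^\top D^{-1}\mu$ plus a $\beta$-scaled null-space maximizer, the same Cauchy--Schwarz argument for the barrier row, the same value comparison against the zero-slack point yielding $c_\tau=2\|R\|^{1/2}\|\tbar\|$, and the same variation-of-constants bound. The scope caveat you raise in Step~2 is accurate and matches how the paper's proof actually works: it too compares against the $(\tau^{(0)},0)$ point from the strict-feasibility case, so \eqref{eq:deltabound} implicitly requires the first part's hypotheses along the trajectory, as well as $\tau_{\mathrm{ref}}$ lying in the torque box.
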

\begin{proof}
Strict feasibility: set $a=a_0+\xi$ with $a_0=WJ^\top D^{-1}\mu$ the horizontal lift and $\xi$ feasible for the program \eqref{eq:gamma} scaled by $\beta$. Then $Ja=\mu$ and $\delta=0$. Admissibility of $a$ follows from $\|W^{-1/2}a_0\|\le(1-\beta)$ (Proposition~\ref{prop:struct}(iv)) and $|\xi_i|\le\beta\abar_i$; the barrier row follows from \eqref{eq:feascond} via Cauchy--Schwarz. When $\delta^\star\ne0$: comparing the optimal value with the $(\tau^{(0)},0)$ feasible point from the strict-feasibility case and using strong convexity of the \ac{QP} objective gives \eqref{eq:deltabound}. The error dynamics $\dot{\tilde w}=-K_w\tilde w-\delta^\star$ yield \eqref{eq:wtrack} by the standard ISS bound.
\end{proof}

On symmetric platforms the null-space exchange is zero-sum ($\vartheta\to0$), so spin redistribution cannot recover readiness; the margin comes instead from the strict interior $\alpha(h)>0$, which Proposition~\ref{prop:achiev} guarantees by keeping $\lbar$ below $L^{\mathrm{op}}$. The $\mathcal{O}(\rho^{-1/2})$ bound is a design rule---quadruple $\rho$ to halve the tracking error---and the slack $\delta$ enters the outer loops as a small bounded disturbance, trading a persistent tracking error for the elimination of an impulsive one at the moment authority collapses.

\subsubsection{Parametric robustness}\label{sec:robust}
The results above are nominal; motor parameters vary with temperature, battery state, and propeller wear. Two effects must be handled. The first is the \emph{metric}: the true $L(v;\zeta)$ at the true parameters $\zeta$ may lie below the modelled $h+\lbar$. The second, and more dangerous, is the \emph{drift}: the barrier is enforced using the modelled dynamics $d(v;\hat\zeta)$, where
$\hat\zeta=(\tbar,b)$ are the nominal parameter values, while the true plant runs $d(v;\zeta)$ with $\zeta\in\mathcal{Z}_p$, and the residual is not small. Bounding a metric is not the same as controlling its derivative; the two must be treated separately.

Acceleration headroom $\abar_i=(\tbar_i-b_iv_i^2)/m_i$ is affine in both the motor torque limit $\tbar_i$ and the drag coefficient $b_i$: a weaker motor ($\tbar_i$ low) and a draggier propeller ($b_i$ high) compound the headroom loss through the same fraction. Bounding both by a relative half-width $p\in(0,1)$---so $\tbar'_i\in[\tbar_i(1\pm p)]$ and $b'_i\in[b_i(1\pm p)]$---the combined worst case is attained at a single vertex of the parameter box, the doubly degraded corner $\zeta^-=(\tbar(1-p),b(1+p))$, collapsing an otherwise $2^{2n}$ vertex enumeration to a single evaluation.

\begin{theorem}[Worst-Case Metric and Robust Forward Invariance]\label{thm:robust}\label{thm:robinv}
Let $\mathcal{Z}_p=\{(\tbar',b'):\tbar'_i\in[\tbar_i(1\pm p)], b'_i\in[b_i(1\pm p)]\}$,
$\abar^-_i:=(\tbar_i(1-p)-b_i(1+p)v_i^2)/m_i$, $D^-:=4A\diag(v_i^2(\abar^-_i)^2)A^\top$, and
$\Ebar^-:=\{v:|v_i|<\sqrt{\tbar_i(1-p)/(b_i(1+p))}\}\subsetneq\Ebar$. For all $\zeta\in\mathcal{Z}_p$ and $v\in\Ebar^-$: $D(v;\zeta)\succeq D^-(v)$, so $L(v;\zeta)\ge L^-(v):=\ln\det D^-(v)$. Moreover, if $\tau$ satisfies the tightened barrier row
\begin{equation}\label{eq:robustcbf}
    \nabla h_p^\top M^{-1}\tau\ge-\alpha(h_p)
    -\min_{\zeta\in\mathcal{Z}_p}\nabla h_p^\top d(v;\zeta),
\end{equation}
where $h_p:=L^--\lbar$, the row being enforced with the worst-case available torque $|\tau_i|\le\tbar_i(1-p)$, and
\begin{equation}\label{eq:driftmin}
    \min_{\zeta\in\mathcal{Z}_p}\nabla h_p^\top d(v;\zeta)
    =\nabla h_p^\top d(v;b)-p\|c\|_1,
\end{equation}
where $c_i:=-\partial_ih_p\cdot b_iv_i|v_i|/m_i$. Hence, $\Cset_p:=\{v\in\Ebar^-:h_p(v)\ge0\}$ is forward invariant for every $\zeta\in\mathcal{Z}_p$.
\end{theorem}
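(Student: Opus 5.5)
The plan is to prove the three claims in turn: the Loewner bound on the metric, the closed form \eqref{eq:driftmin} for the worst-case drift, and robust forward invariance. The last one reuses the nominal argument of Theorem~\ref{thm:inv} with $h$ replaced by $h_p$.

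\emph{Metric bound.} Fix $v\in\Ebar^-$ and $\zeta=(\tbar',b')\in\mathcal{Z}_p$. Since $\tbar'_i\ge\tbar_i(1-p)$ and $b'_i\le b_i(1+p)$, affinity of the headroom in $(\tbar_i,b_i)$ gives
\[
\abar_i(v_i;\zeta)=(\tbar'_i-b'_iv_i^2)/m_i\ge\abar^-_i(v_i).
\]
On $\Ebar^-$ we also have $\abar^-_i>0$, so squaring preserves the inequality and $\psi_i(v_i;\zeta)\ge v_i^2(\abar^-_i)^2$. Then $D(v;\zeta)-D^-(v)=4A\,\diag(\psi_i-\psi^-_i)A^\top\succeq0$. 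Monotonicity of $\ln\det$ on the positive-semidefinite cone, the same step used in Lemma~\ref{lem:grad} and Proposition~\ref{prop:struct}, yields $L(v;\zeta)\ge L^-(v)$. This holds wherever $D^-\succ0$; on $\Cset_p$ this is automatic because $h_p\ge0$.

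\emph{Drift formula.} The drift $d_i(v;\zeta)=-b'_iv_i|v_i|/m_i$ does not depend on $\tbar'$ and is linear in $b'$. Writing $b'_i=b_i(1+p\eta_i)$ with $\eta_i\in[-1,1]$, we get
\[
\nabla h_p^\top d(v;\zeta)=\sum_i c_i(1+p\eta_i).
\]
This is a separable linear program over a box. Each term is minimized independently at $\eta_i=-\operatorname{sgn}c_i$, giving $c_i-p|c_i|$. Summing over $i$ gives $\nabla h_p^\top d(v;b)-p\|c\|_1$, which is \eqref{eq:driftmin}.

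\emph{Robust invariance.} I will follow the proof of Theorem~\ref{thm:inv} in three steps.
\begin{itemize}
\item \emph{Nonemptiness of the tightened row.} This is the robust analogue of Lemma~\ref{lem:sac}. Choose $\tau_i=\tbar_i(1-p)\operatorname{sgn}(\partial_ih_p)$. By separability, the worst-case contribution of rotor $i$ is
\[
\min_{b'_i}\partial_ih_p\,\bigl(\tau_i-b'_iv_i|v_i|\bigr)/m_i\ge|\partial_ih_p|\,\abar^-_i\ge0 .
\]
So the left side of the tightened row, plus the worst-case drift, is at least $\sum_i|\partial_ih_p|\abar^-_i\ge0\ge-\alpha(h_p)$ on $\Cset_p$.
\item \emph{Deliverability and the comparison lemma.} The bound $|\tau_i|\le\tbar_i(1-p)\le\tbar'_i$ makes the torque deliverable on every plant in $\mathcal{Z}_p$. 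For any $\zeta\in\mathcal{Z}_p$ the true derivative then satisfies
\[
\dot h_p=\nabla h_p^\top\bigl(d(v;\zeta)+M^{-1}\tau\bigr)\ge\nabla h_p^\top M^{-1}\tau+\min_{\zeta}\nabla h_p^\top d\ge-\alpha(h_p).
\]
With a locally Lipschitz selection, the comparison lemma gives $h_p(v(t))\ge0$ for all $t\ge0$, and the metric bound then gives $L(v(t);\zeta)\ge\lbar$.
\item \emph{Structure of $\Cset_p$.} Completeness and containment need compactness of $\Cset_p$ and $\Cset_p\subset\operatorname{int}\Ebar^-$. This is Proposition~\ref{prop:struct} applied to the degraded model $(\tbar(1-p),b(1+p))$. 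It holds because the degraded co-metric has exactly the nominal form with modified parameters.
\end{itemize}

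I expect the last step to be the main obstacle. Proposition~\ref{prop:struct} transfers only if A2 and A3 hold for the degraded model, that is, $\lbar$ exceeds the degraded dropout level $L^{-,\max}-\min_k\Delta^-_k$ and is a regular value of $h_p$. Together with local Lipschitzness of the minimizer, which comes from a uniform Slater margin as in Lemma~\ref{lem:slater}, these are not implied by the nominal assumptions. I would state them explicitly as the robust versions of A2 and A3, to be verified offline at the single vertex $\zeta^-$. I would also note that $\Ebar^-\subsetneq\Ebar$ ensures every true $\abar_i$ remains positive along the trajectory.
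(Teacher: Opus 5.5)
Your proposal is correct and follows the paper's proof essentially step for step. The metric bound comes from Loewner and log-det monotonicity at the doubly degraded corner, and \eqref{eq:driftmin} from vertex minimization of the drift, which is affine in $b'$. Nonemptiness of the tightened row is Lemma~\ref{lem:sac} applied to $\abar^-_i$, and invariance follows from the comparison lemma. Your explicit torque choice and your structural step for $\Cset_p$ fill in details the paper leaves implicit.

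Your caution about needing a robust version of A2 is justified, but the check is cheaper than you suggest. At $\zeta^-$ every $\psi^\star_i$ scales by the same factor $(1-p)^3/(1+p)$. The degraded leverage scores therefore coincide with the nominal $\lambda_k$, and robust A2 reduces to the closed-form condition $\lbar>\ldrop+m\ln\bigl((1-p)^3/(1+p)\bigr)$. This is implied by nominal A2 whenever the floor is not lowered. Only the regular-value condition on $h_p$ is genuinely additional.
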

\begin{proof}
On $\Ebar^-$, $\abar_i(\zeta)\ge\abar^-_i\ge0$; squaring (both non-negative) and applying Loewner monotonicity and log-det monotonicity gives the metric bound. For invariance: $\dot h_p=\nabla h_p^\top(d(\zeta)+M^{-1}\tau)\ge\nabla h_p^\top M^{-1}\tau+\min_{\zeta'}\nabla h_p^\top d(\zeta')\ge-\alpha(h_p)$ by \eqref{eq:robustcbf}; the comparison lemma gives $h_p\ge0$. Feasibility of \eqref{eq:robustcbf} under $|\tau_i|\le\tbar_i(1-p)$ follows from Lemma~\ref{lem:sac} applied to $\abar^-_i$, so the reduced box still admits an authority-preserving input. The minimum in \eqref{eq:driftmin} is attained at a sign vertex because the drift component $d_i=-b_iv_i|v_i|/m_i$ is affine in $b_i$.
\end{proof}

\begin{corollary}[Exact Price of Robustness]\label{cor:price}
At the sweet-spot maximizer, $\psi^\star_i=4\tbar_i^3/(27b_im_i^2)$ scales by $(1-p)^3/(1+p)$ at $\zeta^-$, so the floor shift is independent of $A$:
\begin{equation}\label{eq:price}
    L^{\max}_p-L^{\max}=m\ln\!\left(\frac{(1-p)^3}{1+p}\right).
\end{equation}
The certified authority \emph{volume} $\det D^{1/2}$ scales by $\bigl((1-p)^3/(1+p)\bigr)^{m/2}$ and the per-direction radius by its $m$-th root. On an $m=4$ hexarotor the retained volume is $44\%$, $18\%$, $7\%$ at $p=0.1,0.2,0.3$ (radius $81\%$, $65\%$, $51\%$).
\end{corollary}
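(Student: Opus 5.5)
Since $L^{\max}=\ln\det(4S)$ (Lemma~\ref{lem:grad}) and $D^-$ has the same form $4A\Psi^-A^\top$, I will reduce everything to how $\psi_i^\star$ changes. The argument has three steps.

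\emph{Step 1: nominal sweet-spot capacity.} At $v_i^{\star2}=\tbar_i/(3b_i)$ we have $\abar_i=(\tbar_i-\tbar_i/3)/m_i=2\tbar_i/(3m_i)$. Hence
\[
\psi_i^\star=v_i^{\star2}\abar_i^2=\frac{\tbar_i}{3b_i}\cdot\frac{4\tbar_i^2}{9m_i^2}=\frac{4\tbar_i^3}{27b_im_i^2},
\]
which is the first claim.

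\emph{Step 2: capacity at the degraded corner.} Note that $\abar_i^-$ is just $\abar_i$ evaluated at $(\tbar_i(1-p),b_i(1+p))$. Lemma~\ref{lem:grad} therefore applies verbatim to $L^-$: its global maximum $L_p^{\max}=\ln\det(4S^-)$ sits at $v_i^{\star2}=\tbar_i(1-p)/(3b_i(1+p))$, which lies inside $\Ebar^-$. Substituting into the Step~1 formula gives
\[
\psi_i^{\star,-}=\frac{4\tbar_i^3(1-p)^3}{27b_i(1+p)m_i^2}=c\,\psi_i^\star,\qquad c:=\frac{(1-p)^3}{1+p}.
\]
The factor $c$ is the same for every $i$, so $S^-=cS$.

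\emph{Step 3: the gap and the volumes.} From $S^-=cS$ we get $\det(4S^-)=c^m\det(4S)$, so $L_p^{\max}-L^{\max}=m\ln c$ for any $A$; this is \eqref{eq:price}. The volume $\det D^{1/2}$ scales by $c^{m/2}$. The radius, read as the geometric mean of the semi-axes $\sqrt{\lambda_i(D)}$, is the $m$-th root of the volume and scales by $c^{1/2}$.

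The numbers then follow by direct evaluation:
\begin{itemize}
\item $p=0.1$: $c=0.729/1.1\approx0.663$, so $c^2\approx0.44$ and $c^{1/2}\approx0.81$.
\item $p=0.2$: $c=0.512/1.2\approx0.427$, so $c^2\approx0.18$ and $c^{1/2}\approx0.65$.
\item $p=0.3$: $c=0.343/1.3\approx0.264$, so $c^2\approx0.07$ and $c^{1/2}\approx0.51$.
\end{itemize}

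The one point that needs care is the observation in Step~2 that the degraded optimum is again a sweet spot. Comparing only at the nominal $\vstar$ would give a lower bound, not the maximum. Because $\abar^-$ has exactly the nominal functional form, the Loewner and $\det$ monotonicity argument of Lemma~\ref{lem:grad} carries over unchanged, and this settles it.
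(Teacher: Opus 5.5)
Your proof is correct and follows the paper's own route. The paper gives no separate proof; the reasoning embedded in the corollary's first sentence is yours: evaluate $\psi_i^\star=4\tbar_i^3/(27b_im_i^2)$, substitute the doubly degraded corner $\zeta^-$ to get the uniform factor $c=(1-p)^3/(1+p)$, and conclude $\det(4S^-)=c^m\det(4S)$ independently of $A$. Your check that the degraded maximizer is again a sweet spot inside $\Ebar^-$ supplies a step the paper leaves implicit. One small strengthening: $S^-=cS$ makes the degraded ellipsoid exactly $\sqrt{c}$ times the nominal one, so every semi-axis (not merely their geometric mean) scales by $c^{1/2}$.
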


The minimization in \eqref{eq:driftmin} costs a single $\mathcal{O}(n)$ pass---a weighted sum $p\|c\|_1$ over quantities the nominal filter already computes---with no extra \ac{QP} constraints or vertex enumerations. The price \eqref{eq:price} is exact and unavoidable, attained at $\zeta^-\in\mathcal{Z}_p$, so no method certifying the full box can recover more. Torque appears cubed in the capacity metric, making torque calibration roughly $3\times$ more valuable per unit relative error than drag calibration.

\subsubsection{Unmodeled input delay}\label{sec:delay}
Under transport delay $\tau_d$, the \ac{QP} answers ``what torque is safe?'' about a state the vehicle has already left. Two mismatch terms accumulate: the drift at the stale state differs from the current drift, and the gradient has rotated since the \ac{QP} was solved. The barrier may therefore continue to fall for an entire delay window after the \ac{QP} has commanded it to stop.

\begin{proposition}[Delay Undershoot]\label{prop:delay}
Let the applied torque be $\tau^\star(v(t-\tau_d))$ and define $K_1:=\sup_{v \in \Cset} \|\nabla(\nabla h^\top d)(v)\|$, $K_2:=\sup\|\nabla^2h\|_\infty$, $V:=\max_i\sup(|d_i|+\tbar_i/m_i)$, $\bar H:=L^{\max}-\lbar$, and
$K:=K_1+K_2\max_i(\tbar_i/m_i)$. For a linear class-$\mathcal{K}$ function $\alpha$,
\begin{equation}\label{eq:eta}
    h(v(t))\ge-\eta(\tau_d),
\end{equation}
where $\eta(\tau_d):=(\alpha\bar H+KV\tau_d)\tau_d+\tfrac{KV\tau_d}{\alpha}$ is the worst-case undershoot bound.
\end{proposition}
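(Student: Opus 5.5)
Write $t_d:=t-\tau_d$ for the instant at which the applied torque was computed, and $\alpha(s)=\alpha s$. The plan is to compare the actual barrier derivative with the one the QP certified at $t_d$, bound the mismatch by a Lipschitz argument over the delay window, and then integrate a perturbed comparison inequality.

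\textbf{Step 1: the certified inequality at the stale state.} By Theorem~\ref{thm:inv}, $\tau^\star(v(t_d))$ satisfies the barrier row there:
\[
\nabla h(v(t_d))^\top\bigl(d(v(t_d))+M^{-1}\tau^\star\bigr)\ge-\alpha h(v(t_d)).
\]

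\textbf{Step 2: the mismatch.} The actual derivative is $\dot h(t)=\nabla h(v(t))^\top\bigl(d(v(t))+M^{-1}\tau^\star\bigr)$. Split the difference from the certified quantity into two pieces. The drift piece is $\nabla h^\top d$ evaluated at $v(t)$ and at $v(t_d)$; it is bounded by $K_1\|v(t)-v(t_d)\|$. The gradient piece is $(\nabla h(v(t))-\nabla h(v(t_d)))^\top M^{-1}\tau^\star$; it is bounded by $K_2\max_i(\tbar_i/m_i)\,\|v(t)-v(t_d)\|$, reading $\|\cdot\|_\infty$ on $\nabla^2h$ so that the norms pair consistently with the torque bound. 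For the displacement, every component of $\dot v$ is bounded by $V$, so $\|v(t)-v(t_d)\|\le V\tau_d$ in the matching norm. Together these give
\[
\dot h(t)\ge-\alpha h(v(t_d))-KV\tau_d.
\]

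\textbf{Step 3: from the stale $h$ to the current $h$.} Since $h(v(t))-h(v(t_d))=\int_{t_d}^{t}\dot h$, I need a bound on $|\dot h|$. Only a lower bound matters, and the loss of $h$ over the window is at most $(\alpha\bar H+KV\tau_d)\tau_d$, using $h\le\bar H$. Hence $h(v(t_d))\le h(v(t))+(\alpha\bar H+KV\tau_d)\tau_d$, and so
\[
\dot h\ge-\alpha h-\alpha(\alpha\bar H+KV\tau_d)\tau_d-KV\tau_d.
\]
Applying the comparison lemma from $h(0)\ge0$ gives
\[
h\ge-(\alpha\bar H+KV\tau_d)\tau_d-\frac{KV\tau_d}{\alpha},
\]
which is $\eta(\tau_d)$.

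\textbf{Main obstacle: the sup bounds are taken over $\Cset$, but under delay the trajectory may leave $\Cset$.} I would handle this by a bootstrap (continuation) argument on a slightly enlarged compact sublevel set $\{h\ge-\eta\}\cap\Ebar$. For $\tau_d$ small this set stays inside $\operatorname{int}\Ebar$ by Proposition~\ref{prop:struct}, and the constants are read as suprema over it. The bound is then self-consistent as long as the trajectory has not exited, and continuity shows it never does.

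I also expect the crude handling in Step 3 to need care, specifically keeping $h\le\bar H$ valid along the trajectory; this holds because $L\le L^{\max}$ globally by Lemma~\ref{lem:grad}.
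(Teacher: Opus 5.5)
Your proposal is correct and follows the paper's proof. You decompose $\dot h$ around the stale state $v(t-\tau_d)$, bound the drift and gradient mismatch brackets by $K_1V\tau_d$ and $K_2V\tau_d\max_i(\tbar_i/m_i)$, use $h\le\bar H$ to bound the loss of $h$ over the delay window, and close with the comparison lemma, which reproduces $\eta(\tau_d)$ exactly.

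Your continuation argument on an enlarged sublevel set fills in a point the paper leaves implicit: the constants are suprema over $\Cset$, while the delayed trajectory may leave it. To complete that argument you also need the barrier row to stay feasible at stale states with $h<0$. Theorem~\ref{thm:inv} does not cover those states, but feasibility holds for small $\eta$, because $\nabla h$ is bounded away from zero near $\partial\Cset$ by A3 and compactness.
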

\begin{proof}
Decompose $\dot h(t)$ around the delayed state $v^-$. The stale barrier row contributes $\ge-\alpha h(v^-)$; the two mismatch brackets are bounded by $K_1V\tau_d$ and $K_2V\tau_d\max_i(\tbar_i/m_i)$. Using $h\le\bar H$ and the standard comparison lemma yields \eqref{eq:eta}.
\end{proof}

\begin{corollary}[Certified Delay Ceiling]\label{cor:ceiling}
Raising the floor to $\lbar+\eta(\tau_d)$ certifies the original floor $\lbar$ under delay $\tau_d$ iff $\eta(\tau_d)\le L^{\mathrm{op}}-\lbar$. The certified ceiling $\tau_d^\star$ is the unique root of
$\eta(\tau_d^\star)=L^{\mathrm{op}}-\lbar$. Beyond $\tau_d^\star$, the tightened constraint set is empty.
\end{corollary}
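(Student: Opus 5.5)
The plan is to treat the raised floor $\lbar' := \lbar + \eta(\tau_d)$ as a new certified level and combine Proposition~\ref{prop:delay} with the floor-window logic of \eqref{eq:window} and Proposition~\ref{prop:achiev}. Write $h' := L - \lbar'$ and $\Cset' := \{v\in\Ebar : h'(v)\ge0\}\subseteq\Cset$. The proof splits into a sufficiency direction, a necessity direction, and a uniqueness claim for the root $\tau_d^\star$.

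\emph{Sufficiency.} Suppose $\eta(\tau_d)\le L^{\mathrm{op}}-\lbar$. Then $\lbar' \le L^{\mathrm{op}}$, and since $\eta\ge0$ we also have $\lbar'>\lbar>\ldrop$. So $\lbar'$ lies in the window \eqref{eq:window}, and A1--A3 hold for $h'$ (A3 generically, by Sard's theorem). Proposition~\ref{prop:struct}, Lemma~\ref{lem:slater} and Theorem~\ref{thm:inv} therefore apply verbatim to $h'$, and the delayed filter is well posed and feasible on the commanded fibers.

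Next I apply Proposition~\ref{prop:delay} to $h'$ in place of $h$. This gives $h'(v(t))\ge-\eta(\tau_d)$, that is, $L(v(t))\ge\lbar$, which is exactly the original floor. One point needs checking: the constants $K_1,K_2,V,\bar H$ in $\eta$ are suprema over the certified set. Because $\Cset'\subseteq\Cset$ and $\bar H' = L^{\max}-\lbar' \le \bar H$, the bound computed with the original constants stays valid, so $\eta$ need not be recomputed. The trajectory may leave $\Cset'$ during undershoot, but only into $\Cset$, where the constants are still defined. Making this rigorous is the main obstacle: I would run a bootstrap or continuation argument, taking the first exit time from $\Cset$ and deriving a contradiction from the bound.

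\emph{Necessity.} Suppose $\eta(\tau_d)>L^{\mathrm{op}}-\lbar$. Then $\lbar'>L^{\mathrm{op}}$. By the definition \eqref{eq:hop} there is a $w\in\mathcal{W}$ whose fiber satisfies $\max_{f^{-1}(w)\cap\Ebar}L<\lbar'$. Proposition~\ref{prop:achiev} then says $\Cset'\cap f^{-1}(w)=\emptyset$, so the tightened constraint set is empty along that command and no allocator can hold $h'\ge0$ while tracking. This proves the ``iff'' and the final sentence of the statement.

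\emph{Uniqueness of the ceiling.} With linear $\alpha$, the undershoot bound expands as $\eta(\tau_d)=\alpha\bar H\tau_d+KV\tau_d^2+KV\tau_d/\alpha$. This is a polynomial in $\tau_d$ with nonnegative coefficients, $\eta(0)=0$, and it is strictly increasing on $[0,\infty)$ (assuming $K V>0$ or $\alpha\bar H>0$) and unbounded. Because $L^{\mathrm{op}}-\lbar>0$ by the window, the intermediate value theorem gives a unique root $\tau_d^\star$, and $\eta(\tau_d)\le L^{\mathrm{op}}-\lbar$ holds exactly when $\tau_d\le\tau_d^\star$. The root is explicit from the quadratic formula. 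Combined with the two directions above, this completes the corollary.
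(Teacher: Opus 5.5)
Your proposal is correct and follows the route the paper leaves implicit, since the corollary is stated without a separate proof. You apply Proposition~\ref{prop:delay} to the shifted barrier $h'=L-\lbar-\eta(\tau_d)$ so that $h'\ge-\eta$ becomes $L\ge\lbar$, use the window \eqref{eq:window} and Proposition~\ref{prop:achiev} for both directions of the equivalence and for emptiness beyond the ceiling, and use that $\eta$ is a quadratic with nonnegative coefficients, $\eta(0)=0$ and $\bar H>0$ to get strict monotonicity and a unique root; your remark that the undershoot region $\{h'\ge-\eta\}$ is exactly $\Cset$, so the constants of Proposition~\ref{prop:delay} need not be recomputed, is extra care the paper does not spell out (only note that at $\tau_d=0$ you have $\lbar'=\lbar$ rather than $\lbar'>\lbar$, which does not affect the argument).
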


Battery sag (shrinking $\mathcal{Z}_p$) is handled by re-evaluating the window \eqref{eq:window} at the current state; a rotor failure requires recomputing $\Delta_k$ on $A_{-k}$ with two guards---clamp $\Delta_j=+\infty$ for any newly-essential rotor, and refuse to enforce an empty window. Algorithm~\ref{alg:filter} summarizes the offline verification and online loop.


\begin{algorithm}[t]
    \SetAlgoLined\DontPrintSemicolon\small
    \KwIn{$A,(\tbar,b,m),\lbar,\alpha,K_w,\rho,R$}
    \textbf{Offline:} $\lambda_k\leftarrow\psi^\star_kA_{\bullet k}^\top S^{-1}A_{\bullet k}$;
    $\quad\ldrop\leftarrow L^{\max}-\min_k(-\ln(1-\lambda_k))$; verify~\eqref{eq:window}\;
    \While{running}{
      measure $v$; compute $D,J,W,\nabla h$~\eqref{eq:grad}, $\vartheta$~\eqref{eq:theta}\;
      $\tilde w\leftarrow w_{\mathrm{des}}-A\phi(v)$;
      $\quad\mu\leftarrow\dot w_{\mathrm{des}}+K_w\tilde w$;
      $\quad\tau_{\mathrm{ref}}\leftarrow$ effort-min QP\;
      solve~\eqref{eq:qp} for $(\tau^\star,\delta^\star)$; apply $\tau^\star$\;
    }
    \caption{Readiness-barrier allocation filter}\label{alg:filter}
    \vspace*{-0.25em}
\end{algorithm}



\vspace{-0.35em}
\section{Simulation Study}
\label{sec:sim}

We simulate two platforms spanning distinct redundancy ratios and task dimensions: a planar hexarotor ($n{=}6$, $m{=}4$, $\Delta=\ln3$) and a fully-actuated tilted octorotor ($n{=}8$, $m{=}6$, $\Delta=\ln4$), both with identical per-rotor parameters so $\lambda_k=m/n$ and Theorem~\ref{thm:gap} applies exactly. A fully-actuated hexarotor ($n{=}m{=}6$) is excluded: every rotor is essential ($\lambda_k=1$, $\Delta_k=+\infty$) and the stratum-jump problem does not arise. Motor inertia and $\tbar_i/b_i$ are anchored to fixed-RPM wind-tunnel data from the UIUC Propeller Database\footnote{\url{https://m-selig.ae.illinois.edu/props/propDB.html}} (APC $10{\times}4.7$ propeller profile), grounding saturation speeds in true aerodynamic limits; \eqref{eq:motor} is integrated at $1$--$2$\,kHz and \eqref{eq:qp} solved with OSQP\footnote{\url{https://github.com/osqp}}. 

We first verify that the geometric objects lie where the theory places them. Theorem~\ref{thm:gap} predicts $\Delta=\ln(n/(n-m))$ independently of motor parameters, propeller, or payload. Numerically, the hexarotor gives $L^{\max}=-10.128$, $\ldrop=-11.226$ ($\Delta=1.098\approx\ln3$) and the octorotor $L^{\max}=+1.448$, $\ldrop=+0.062$ ($\Delta=1.386\approx\ln4$)---both matching \eqref{eq:symgap} to machine precision. The certifiable window $(\ldrop,L^{\max})$ exceeds one nat on both, so \eqref{eq:window} admits a non-degenerate range of floors rather than a knife-edge.



\subsubsection{Singularity avoidance and closed-loop tracking}
\label{ssec:singexp}

To expose the necessity of the barrier, four allocators drive the hexarotor through the same sinusoidal moment reversal at fixed collective: the effort-minimizing \ac{QP} \cite{Johansen2013allocation} (standard baseline), the greedy readiness maximizer of \cite{Franchi2026aeropromptness} (fiberwise $\arg\max h$), a low-pass filtered greedy variant ($50$\,ms cutoff, the natural engineering patch), and the proposed filter (Algorithm~\ref{alg:filter}, floor via Proposition~\ref{prop:achiev} at $\kappa=1/2$), summarized in Table~\ref{tab:sweep} and Figures~\ref{fig:thrust} and~\ref{fig:lpf}.

\begin{table}[tb]\centering\small
    \caption{Collective-thrust sweep (hexarotor, moment amplitude $1.5\times$ ref, $\lbar$ per Proposition~\ref{prop:achiev}). ``thr.'': collective thrust as a fraction of the nominal hover value; ``TV'': total variation of the commanded allocation; ``wErr'': \acs{RMS} wrench error; $h_{\min}=\min_t h$. Bold: floor violated.}
    \label{tab:sweep}
    \vspace{-0.5em}
    \renewcommand{\arraystretch}{1.3}
    \setlength{\tabcolsep}{3pt}
    \begin{adjustbox}{max width=0.98\columnwidth}
    \begin{tabular}{cc|ccc|ccc|ccc|ccc}
    \toprule
     & & \multicolumn{3}{c|}{effort-min} & \multicolumn{3}{c|}{greedy DAAM} & \multicolumn{3}{c|}{low-pass DAAM} & \multicolumn{3}{c}{Algorithm~\ref{alg:filter}}\\
    thr. & $\lbar$ & $h_{\min}$ & TV & wErr & $h_{\min}$ & TV & wErr & $h_{\min}$ & TV & wErr & $h_{\min}$ & TV & wErr\\
    \midrule
    0.6 & \multicolumn{13}{c}{not certifiable: $L^{\mathrm{op}}=-11.49<\ldrop=-11.23$}\\
    0.7 & $-11.14$ & $+0.08$ & 2.8 & 0.0003 & $\mathbf{-0.54}$ & 31.5 & 0.145 & $\mathbf{-0.54}$ & 31.5 & 0.147 & $+0.12$ & 2.8 & 0.0018\\
    0.8 & $-11.00$ & $+0.22$ & 2.6 & 0.0003 & $\mathbf{-0.48}$ & 8.4 & 0.113 & $\mathbf{-0.48}$ & 8.4 & 0.114 & $+0.22$ & 2.6 & 0.0003\\
    0.9 & $-10.91$ & $+0.31$ & 2.5 & 0.0003 & $+0.31$ & 2.4 & 0.0069 & $+0.31$ & 2.4 & 0.0076 & $+0.31$ & 2.5 & 0.0003\\
    1.0 & $-10.87$ & $+0.35$ & 2.3 & 0.0003 & $+0.35$ & 2.3 & 0.0069 & $+0.35$ & 2.3 & 0.0076 & $+0.35$ & 2.3 & 0.0003\\
    1.1 & $-10.87$ & $+0.35$ & 2.2 & 0.0003 & $+0.36$ & 2.2 & 0.0069 & $+0.36$ & 2.2 & 0.0076 & $+0.35$ & 2.2 & 0.0003\\
    1.2 & $-10.90$ & $+0.33$ & 2.1 & 0.0003 & $+0.33$ & 2.1 & 0.0069 & $+0.33$ & 2.1 & 0.0076 & $+0.33$ & 2.1 & 0.0003\\
    \bottomrule
    \end{tabular}
    \end{adjustbox}
\end{table}

\begin{figure}[tb]
    \centering
    \resizebox{\columnwidth}{!}{%
        \input{figures/tikz/fig_sweep.tex}}
    \vspace*{-1.7em} 
    \caption{Collective-thrust sweep: worst-case authority $\min_t h$ (left) and \ac{RMS} wrench error (right).}
    \label{fig:thrust}
    \vspace{-1.25em}
\end{figure}

\begin{figure}[t]
    \centering
    \resizebox{\columnwidth}{!}{%
        \input{figures/tikz/fig_lpf2.tex}}
    \vspace*{-1.9em} 
    \caption{Low-pass filtering failure: peak commanded actuator rate (left) and barrier value $h(t)$ (right).}
    \label{fig:lpf}
    \vspace{-1.5em}
\end{figure}


The central pathology appears in the authority-scarce rows of Table~\ref{tab:sweep} (collective $0.7$--$0.8$), where greedy DAAM violates the floor it maximizes ($h_{\min}=-0.54$ at $0.7$, $-0.48$ at $0.8$). The mechanism is in the total-variation column: as the moment reverses, the two symmetric optima $\pm\vstar$ trade dominance, each handover is an orthant crossing (Section~\ref{sec:theory}) that drives rotors through the zero-spin dead zone, and the commanded allocation jumps. Its ``TV'' spikes to $31.5$ at collective $0.7$, an order of magnitude above the $2.8$ of both continuous allocators, because the motors are slammed against their limits at every handover. The finite motor time constant converts this discontinuity into a sustained tracking error: \ac{RMS} wrench error reaches $0.145$ at $0.7$ against $0.0018$ for the proposed filter (eighty-fold), widening to $0.113$ vs.\ $0.0003$ at $0.8$. Violation and error are one defect---the signature of an allocation optimal pointwise yet discontinuous in time.

The full sweep (Figure~\ref{fig:thrust}) makes the regime structure explicit. The greedy trace dives below the floor only in the scarce band where the optima are farthest apart; from collective $0.9$ the optima stop exchanging dominance, no crossing is demanded, and all allocators coincide, with wrench error time-aligned to the floor excursions. Where the barrier is inactive ($\ge0.9$) the filter reproduces its nominal allocator exactly, incurring no cost; where active ($0.7$) it improves worst-case authority over the effort-minimizing baseline ($+0.12$ vs.\ $+0.08$) at a wrench-error price of $1.8\times10^{-3}$, matching the $\mathcal{O}(\rho^{-1/2})$ bound of Theorem~\ref{thm:cont}. In actuator space, greedy DAAM chords across the orthant boundary through $v_i=0$, while the filtered trajectory stays in one orthant with $\min_{i,t}|v_i|=0.335$, the uniform clearance of Proposition~\ref{prop:struct}(i), absorbing the reversal in the null space rather than jumping.

Low-pass filtering deserves a fair reading because its failure is instructive. It succeeds at its stated job---Figure~\ref{fig:lpf} (left) shows the peak commanded rate falling from $731$ to $14.6\,\mathrm{s^{-2}}$---but a filtered reversal traverses the \emph{same distance} through actuator space, merely slowly. The phase lag then stretches the dwell below the floor to $0.756$\,s, \emph{longer} than the $0.629$\,s of the unfiltered baseline at nearly the same depth (Figure~\ref{fig:lpf}, right): a rate hazard is traded for an exposure hazard. The proposed filter is a different object, with peak rate $0.5\,\mathrm{s^{-2}}$ and zero floor-violation time. Finally, along the closed loop the alignment ratio \eqref{eq:theta} satisfies $\vartheta\le7\times10^{-3}$: the null space is orthogonal to $\nabla h$ to numerical precision, the exchange is zero-sum, and the entire margin is supplied by the strict-interior term $\alpha(h)>0$, not by null-space ascent.




\subsubsection{Monte-Carlo mission sweeps}

To confirm the single-maneuver result is not an adversarial artifact, we draw random certifiable missions with collective $\sim\mathcal{U}[0.7,1.2]$ and moment amplitude $\sim\mathcal{U}[1.0,2.0]$. The certifiability frontier runs diagonally because $\psi_i=v_i^2\abar_i^2$ vanishes as $v_i\to0$: large moments at low collective leave no authority to certify. Over the full envelope the barrier is largely inactive and the filter reproduces its nominal allocator at no performance cost. In the scarce corner ($\mathcal{U}[0.68,0.82]$) the separation is absolute: greedy DAAM violates the floor in $80\%$ of missions with wrench error $(85.9\pm18.6)\times10^{-3}$, whereas the proposed filter holds strictly positive authority ($+0.148\pm0.020$) at error $(1.9\pm0.9)\times10^{-3}$---a forty-five-fold gap achieved \emph{together} with the authority guarantee greedy maximization was meant to provide. Greedy and low-pass are dominated on both axes at once, so uncertified maximization is not a tradeoff against the filter but strictly worse.




\subsubsection{Resilience to unmodeled delay}

Transport delay in the torque path is a failure channel no allocation certificate can ignore: \ac{ESC}, bus scheduling, and rate-loop filtering place $1$--$10$\,ms between the \ac{QP} solution and the applied torque. Figure~\ref{fig:delay} sweeps $\tau_d$ to failure across three regimes. Below the certified ceiling $\tau_d^\star=1.96$\,ms (Corollary~\ref{cor:ceiling}) the margined floor $\lbar+\eta(\tau_d)$ deterministically protects the state; between $\tau_d^\star$ and $\approx100$\,ms the certificate is silent yet the floor degrades only mildly ($\min_t h=+0.140$ at $20$\,ms, $+0.097$ at $50$\,ms), because the strict interior keeps the stale gradient descent-blocking well past expiry. The $30$--$50\times$ gap reflects the worst-case constant $KV\approx400\,\mathrm{s^{-1}}$ (the product of the Lipschitz constant of $\nabla h^\top d$ and the maximum actuator speed), attained only when every rotor accelerates most damagingly at full torque. Real \ac{ESC} delays thus straddle $\tau_d^\star$---the lower end covered by the theorem, the upper by an order-of-magnitude margin---and Corollary~\ref{cor:ceiling} states exactly how to certify $10$\,ms (raise the floor by $\eta(\tau_d)$) and when it is impossible (beyond $\tau_d^\star$ the tightened set is empty).

\begin{figure}[tb]
    \centering
    \resizebox{.74\columnwidth}{!}{%
        \input{figures/tikz/fig_delayregimes.tex}}
    \vspace*{-0.75em}    
    \caption{Delay sweep: certified (green), tolerant (orange), and failure (red) regimes; hatched: typical \ac{ESC} range. $\min_t h$ (red), bound $-\eta(\tau_d)$ (dashed, Proposition~\ref{prop:delay}), margined floor enforced for $\tau_d\le\tau_d^\star$ (blue squares, dotted).}
    \label{fig:delay}
    \vspace{-1.0em}
\end{figure}




\subsubsection{Verification of worst-case robust invariance}

Motor torque limits and drag coefficients are the two quantities a real vehicle least holds constant---torque sags with voltage, drag drifts with wear---so we subject $(\tbar_i,b_i)$ to $\pm p$ mismatches over eight random plants per level (Table~\ref{tab:robust}). The nominal certificate is predictably brittle: enforced on the wrong plant it fails on $38\%$ at $p=10\%$ (worst case $-0.436$) and $50\%$ at $p=20$--$30\%$, while the uncertified effort-minimizing \ac{QP} fails on $88\%$ everywhere. This is not a flaw in the construction but the expected result of evaluating an exact object at the wrong point of parameter space---the failure Theorem~\ref{thm:robust} eliminates. The robust formulation restores zero violations at a price known before flight: Corollary~\ref{cor:price} gives the floor shift $L^{\max}_p-L^{\max}=m\ln\frac{(1-p)^3}{1+p}$, independent of $A$ and exact at the corner $\zeta^-\in\mathcal{Z}_p$, so the enforced floors descend from $-11.06$ to $-12.71$, $-14.47$, $-16.39$ as $p$ grows to $30\%$ (retained authority volume $44\%$, $18\%$, $7\%$).

The lower block of Table~\ref{tab:robust} isolates the ablation: with the metric bound alone (barrier at $\zeta^-$, drift nominal) the certificate still fails on $38\%$ at $p=30\%$, since bounding a function is not bounding its derivative. Restoring the closed-form drift---one $\mathcal{O}(n)$ pass---returns violations to zero and cuts wrench error fourfold ($0.0367\to0.0093$); metric and drift bounds are separately necessary, sufficient only together.

\begin{table}[tb]\centering\small
    \caption{Robustness campaign ($8$ plants per level). ``viol.'': fraction below the certified floor. Lower: aggressive floor ($\kappa=0.9$), isolating the drift term of Theorem~\ref{thm:robust}.}
    \label{tab:robust}
    \vspace{-0.5em}
    \renewcommand{\arraystretch}{0.7}
    \setlength{\tabcolsep}{3pt}
    \begin{adjustbox}{max width=0.98\columnwidth}
    \begin{tabular}{clccccc}
    \toprule
    $p$ & controller & mean & worst & viol. & wErr & $\lbar$\\
    \midrule
    $0\%$ & nominal barrier & $+0.164$ & $+0.164$ & $0.00$ & $0.0003$ & $-11.06$\\
    \midrule
    $10\%$ & effort-min QP  & $-0.196$ & $-0.504$ & $0.88$ & $0.0035$ & $-11.06$\\
     & nominal barrier       & $+0.014$ & $-0.436$ & $0.38$ & $0.0047$ & $-11.06$\\
     & \textbf{robust (Thm.~\ref{thm:robust})} & $+1.82$ & $+1.18$ & $\mathbf{0.00}$ & $0.0193$ & $-12.71$\\
    \midrule
    $20\%$ & effort-min QP  & $-0.610$ & $-1.22$ & $0.88$ & $0.0062$ & $-11.06$\\
     & nominal barrier       & $-0.103$ & $-0.940$ & $0.50$ & $0.0127$ & $-11.06$\\
     & \textbf{robust}       & $+3.56$ & $+2.25$ & $\mathbf{0.00}$ & $0.0347$ & $-14.47$\\
    \midrule
    $30\%$ & effort-min QP  & $-1.08$ & $-2.00$ & $0.88$ & $0.0083$ & $-11.06$\\
     & nominal barrier       & $-0.231$ & $-1.55$ & $0.50$ & $0.0392$ & $-11.06$\\
     & \textbf{robust}       & $+5.44$ & $+3.40$ & $\mathbf{0.00}$ & $0.0466$ & $-16.39$\\
    \midrule
    \multicolumn{7}{l}{\emph{$\kappa{=}0.9$ ablation: metric bound alone vs.\ full robust filter}}\\
    $10\%$ & metric bound only  & $+0.114$ & $+0.099$ & $0.00$ & $0.0165$ & --\\
     & \textbf{robust}          & $+0.127$ & $+0.104$ & $0.00$ & $0.0045$ & --\\
    $20\%$ & metric bound only  & $+0.099$ & $+0.066$ & $0.00$ & $0.0284$ & --\\
     & \textbf{robust}          & $+0.156$ & $+0.121$ & $0.00$ & $0.0068$ & --\\
    $30\%$ & metric bound only  & $+0.025$ & $\mathbf{-0.089}$ & $\mathbf{0.38}$ & $0.0367$ & --\\
     & \textbf{robust}          & $+0.175$ & $+0.131$ & $\mathbf{0.00}$ & $0.0093$ & --\\
    \bottomrule
    \end{tabular}
    \end{adjustbox}
    \vspace{-1.75em}
\end{table}




\section{Conclusion}
\label{sec:conclusion}

Treating multirotor control authority as a certified, forward-invariant quantity rather than an objective to be maximized yields an allocation filter that protects the certified set from the two collapse modes to which effort- and readiness-based allocators are blind. For the modeled dynamics it gives exact confinement, an explicit alignment ratio dictating when tracking stays exact, and an $\mathcal{O}(\rho^{-1/2})$ bound otherwise, all from the closed-form inequality $\lbar>\ldrop$ with no vertex enumeration. Because torque enters the metric cubically, compensating battery sag is roughly three times more valuable than refining drag estimates, and robust invariance is recovered in a single $\mathcal{O}(n)$ pass at a closed-form authority cost (Corollary~\ref{cor:price}).

\emph{Scope and limitations.} Three boundaries delimit the guarantees. The certificates use the quasi-static drag model $-b_iv_i|v_i|/m_i$; in fast forward flight inflow makes $b_i$ state-dependent, which Theorem~\ref{thm:robust} absorbs as bounded $\pm p$ variation though large excursions warrant online estimation of $b_i$. Invariance is proved in continuous time, with Remark~\ref{rem:sampled} bounding the zero-order-hold error absorbed by the strict interior at kilohertz rates. Finally, $\nabla h$ uses measured speeds, so feedback noise enters the constraint row, tolerated by the same margin. These motivate flight validation under aerodynamic effects that deform $D$ and online adaptation of $\mathcal{Z}_p$.



\vspace{-0.5em}
\bibliographystyle{IEEEtran}
\bibliography{references}

\end{document}

%% file: figures/tikz/fig_ellipse.tex
\begin{tikzpicture}
    \begin{axis}[
      width=8.5cm, 
      height=7.5cm, 
      axis equal image,
      xmin=-0.45, 
      xmax=0.45, 
      ymin=-0.45, 
      ymax=0.45,
      xlabel={Roll Wrench Rate $\dot\tau_x$},
      ylabel={Pitch Wrench Rate $\dot\tau_y$},
      ylabel style={yshift=-0.355cm, xshift=0cm}, 
      xlabel style={yshift=0.105cm, xshift=0cm}, 
      xtick={-0.3, 0, 0.3}, 
      ytick={-0.3, 0, 0.3},
      grid=both, 
      grid style={dashed, gray!40},
      legend style={at={(0.98,0.02)}, anchor=south east, font=\scriptsize, legend cell align=left, fill=white, fill opacity=0.9, text opacity=1}
    ]
    
    \addplot[fill=cBlue!20, draw=cBlue, line width=1pt] table[x=x,y=y]{data/ell_sweet.dat};
    
    \addplot[fill=cOrange!20, draw=cOrange, line width=1pt, dashed] table[x=x,y=y]{data/ell_stopped.dat};
    
    \addplot[fill=cRed!30, draw=cRed, line width=1pt, densely dotted] table[x=x,y=y]{data/ell_sat.dat};
    
    \draw[-{Latex[length=2mm]}, black, line width=1.2pt] (axis cs:0,0) -- (axis cs:0.20,0.145) node[above left, font=\scriptsize] {$\dot w_{\mathrm{des}}$};
    
    \node[font=\scriptsize, cOrange!90!black, align=center] (aniso) at (axis cs:-0.27, 0.38) {\textbf{Anisotropic collapse}\\Loss of specific\\directional authority};
    \draw[-{Latex[length=1.5mm]}, cOrange!90!black, shorten >=2pt, line width=0.6pt] (aniso) -- (axis cs:-0.205, 0.185); 
    
    \node[font=\scriptsize, cRed!90!black, align=center] (iso) at (axis cs:0.30, -0.38) {\textbf{Isotropic collapse}\\Severe overall\\headroom loss};
    \draw[-{Latex[length=1.5mm]}, cRed!90!black, shorten >=2pt, line width=0.6pt] (iso) -- (axis cs:0.03, -0.045); 

    \node[font=\scriptsize, cBlue!90!black, align=center] (sweet) at (axis cs:0.32, 0.38){\textbf{Sweet spot}\\Maximum\\authority}; 
    \draw[-{Latex[length=1.5mm]}, cBlue!90!black, shorten >=2pt, line width=0.6pt] (sweet) -- (axis cs:0.235, 0.235);
    
    \end{axis}
\end{tikzpicture}

%% file: figures/tikz/fig_bundle.tex
\begin{tikzpicture}[font=\scriptsize,line width=0.5pt,
    halo/.style={fill=white, fill opacity=0.8, text opacity=1, inner sep=1.5pt}]
    
    \begin{scope}
    \draw[rounded corners=3pt,line width=0.6pt] (-0.2,1.5) rectangle (8.6,6.5);
    \node[anchor=south west,font=\scriptsize] at (-0.2,6.40)
      {$\Ebar\subset\mathbb{R}^n$: feasible actuator box (total space)};
      
    \fill[pattern=north east lines,pattern color=cRed!35] (-0.2,6.15) rectangle (8.6,6.5);
    \fill[pattern=north east lines,pattern color=cRed!35] (-0.2,1.5) rectangle (8.6,1.85);
    \node[cRed,font=\tiny,anchor=west,halo] at (0.0,6.32) {saturation: no headroom};
    \node[cRed,font=\tiny,anchor=west,halo] at (0.0,1.67) {saturation: no headroom};
    
    \fill[pattern=north west lines,pattern color=cOrange!45] (-0.2,3.85) rectangle (8.6,4.25);
    \node[cOrange!80!black,font=\tiny,anchor=west,halo] at (-0.1,4.05) 
      {$v_k=0$: no sensitivity\ \ ($L\le\ldrop$)};
      
    \fill[cBlue!16] plot[smooth cycle,tension=0.7]
      coordinates {(1.1,5.55)(3.6,5.85)(6.2,5.65)(7.9,5.30)(7.9,4.55)(5.6,4.45)(2.9,4.60)(1.0,4.90)};
    \draw[cBlue!70,line width=0.6pt] plot[smooth cycle,tension=0.7]
      coordinates {(1.1,5.55)(3.6,5.85)(6.2,5.65)(7.9,5.30)(7.9,4.55)(5.6,4.45)(2.9,4.60)(1.0,4.90)};
      
    \fill[cBlue!16] plot[smooth cycle,tension=0.7]
      coordinates {(1.1,3.55)(3.6,3.75)(6.2,3.60)(7.9,3.35)(7.9,2.55)(5.6,2.45)(2.9,2.60)(1.0,2.90)};
    \draw[cBlue!70,line width=0.6pt] plot[smooth cycle,tension=0.7]
      coordinates {(1.1,3.55)(3.6,3.75)(6.2,3.60)(7.9,3.35)(7.9,2.55)(5.6,2.45)(2.9,2.60)(1.0,2.90)};
      
    \node[cBlue,font=\tiny,anchor=east] at (8.0,5.12) {$\Cset$ on $\Sigma_+$};
    \node[cBlue,font=\tiny,anchor=east] at (8.0,2.78) {$\Cset$ on $\Sigma_-$};
    
    \foreach \x/\lab in {1.7/{}, 4.3/{}, 7.0/{}}{
      \draw[cGrey!80,line width=0.8pt] plot[smooth,tension=0.6]
        coordinates {(\x-0.35,1.9)(\x-0.15,3.0)(\x+0.1,4.05)(\x-0.05,5.1)(\x+0.25,6.15)};}
    \node[cGrey!80!black,font=\tiny,anchor=south west] at (7.05,5.70) {$f^{-1}(w)$};
      
    \coordinate (v) at (1.63,5.10);
    \fill (v) circle (1.3pt);
    \node[anchor=south east,font=\tiny,inner sep=1pt] at ($(v)+(0.00,0.05)$v) {$v$};
    
    \draw[dashed,cGrey,line width=0.5pt] ($(v)+(-0.55,-0.38)$) rectangle ($(v)+(0.55,0.38)$);
    \node[font=\tiny,cGrey!70!black,anchor=north east,halo,inner sep=0.5pt] at ($(v)+(-0.60,-0.25)$) {$|\xi_i|\le\bar a_i$};
    
    \draw[-{Latex[length=1.5mm]},cGreen!60!black,line width=0.9pt] (v) -- ++(0.12,0.60);
    \draw[-{Latex[length=1.5mm]},cGreen!60!black,line width=0.9pt] (v) -- ++(-0.12,-0.60);
    \node[cGreen!50!black,font=\tiny,anchor=west,align=left] at ($(v)+(0.10,0.50)$)
      {$\ker J(v)$};
      
    \draw[-{Latex[length=1.5mm]},cOrange,line width=0.9pt] (v) -- ++(0.95,-0.25);
    \node[cOrange,font=\tiny,anchor=north west,align=left] at ($(v)+(0.87,-0.10)$)
      {$a_0$};
      
    \draw[-{Latex[length=1.5mm]},cBlue,line width=0.9pt] (v) -- ++(-0.85,0.35);
    \node[cBlue,font=\tiny,anchor=south east,align=right,halo] at ($(v)+(-0.80,0.35)$) {$\nabla h$};
    
    \draw[-{Latex[length=2mm]},cRed,dashed,line width=1pt] (7.2,4.75) .. controls (7.9,4.05) .. (7.2,3.35);
    \node[cRed,font=\tiny,anchor=east,align=right] at (8.17,4.37)
      {$\Delta_k$};
    \end{scope}
    
    \begin{scope}[yshift=0cm]
    \draw[fill=cGrey!8,line width=0.5pt] (0.2,0.05) -- (8.9,0.05) -- (8.2,1.05) -- (-0.5,1.05) -- cycle;
    \node[anchor=south west,font=\scriptsize] at (0.15,-0.06) {$\mathcal{B}=\mathbb{R}^m$: wrench space};
    
    \draw[cBlue!80,line width=1pt] plot[smooth,tension=0.8]
      coordinates {(1.0,0.35)(2.6,0.72)(4.6,0.45)(6.3,0.78)(7.6,0.50)};
    \node[anchor=south east,font=\tiny,cBlue] at (7.5,0.25) {commanded task $w(t)$};
    
    \foreach \x/\y/\xx in {1.6/0.57/1.7, 4.15/0.52/4.3, 6.85/0.76/7.0}{
      \fill (\x,\y) circle (1.1pt);
      \draw[-{Latex[length=1.2mm]},cGrey,line width=0.4pt,densely dotted] (\x,\y) -- (\xx-0.35,1.85);}
    \end{scope}
    
    \node[font=\tiny,cGrey!70!black,rotate=90,anchor=south,halo] at (9.15,2.6) {$f(v)=A\,\phi(v)$};
    \draw[-{Latex[length=1.5mm]},cGrey!70!black,line width=0.6pt] (8.85,3.6) -- (8.85,1.2);
\end{tikzpicture}

%% file: figures/tikz/fig_nested.tex
\begin{tikzpicture}[
        font=\scriptsize, 
        line width=0.5pt,
        halo/.style={fill=white, fill opacity=0.8, text opacity=1, inner sep=1.5pt}
    ]
    \begin{axis}[
        width=6.6cm, height=6.6cm,
        xmin=-1.12, xmax=1.12, ymin=-1.12, ymax=1.12,
        xlabel={$v_1/v_1^{\mathrm{sat}}$}, ylabel={$v_2/v_2^{\mathrm{sat}}$},
        xlabel style={yshift=1.5ex},   
        ylabel style={yshift=-2.0ex},  
        xtick={-1,-0.577,0,0.577,1}, xticklabels={$-1$,$-v^\star$,$0$,$v^\star$,$1$},
        ytick={-1,-0.577,0,0.577,1}, yticklabels={$-1$,$-v^\star$,$0$,$v^\star$,$1$},
        tick label style={font=\tiny}, axis equal image,
        clip mode=individual
    ]
    
    \draw[dashed,cRed!80,line width=0.8pt] (axis cs:-1,-1) rectangle (axis cs:1,1);
    
    \draw[cOrange,line width=1pt] (axis cs:-1.12,0)--(axis cs:1.12,0);
    \draw[cOrange,line width=1pt] (axis cs:0,-1.12)--(axis cs:0,1.12);
    
    \addplot[cGrey!45,line width=0.6pt,fill=cGrey!14] table {data/n2_Keps.dat};
    \addplot[cBlue,line width=0.8pt,fill=cBlue!28] table {data/n2_C.dat};
    
    \node[cGrey!70!black,font=\tiny,anchor=center] at (axis cs:-0.83,0.21) {$K_\varepsilon$};
    \node[cBlue,font=\scriptsize,anchor=center] at (axis cs:0.577,0.577) {$\Cset$};
    \node[cBlue,font=\tiny,anchor=center] at (axis cs:-0.577,0.577) {$\Cset$};
    \node[cBlue,font=\tiny,anchor=center] at (axis cs:-0.577,-0.577) {$\Cset$};
    \node[cBlue,font=\tiny,anchor=center] at (axis cs:0.577,-0.577) {$\Cset$};
    
    \draw[{Latex[length=1.2mm]}-{Latex[length=1.2mm]},cOrange!80!black,line width=0.6pt] 
      (axis cs:0.0,0.80)--(axis cs:0.28,0.80);
    
    \draw[{Latex[length=1.2mm]}-{Latex[length=1.2mm]},cRed,line width=0.6pt] 
      (axis cs:0.90,0.34)--(axis cs:1.0,0.34);
    
    \draw[-{Latex[length=1.5mm]},cBlue,line width=0.9pt] (axis cs:0.577,0.86)--(axis cs:0.577,1.02);
    \node[cBlue,font=\tiny,anchor=south] at (axis cs:0.577,0.98) {$\nabla h$};
    
    \draw[-{Latex[length=2mm]},cRed,dashed,line width=1pt] 
      (axis cs:0.40,0.30)..controls (axis cs:0.05,0.0)..(axis cs:0.40,-0.30);
    \node[cRed,font=\tiny,anchor=north west,align=left] at (axis cs:0.32,-0.25)
      {$\Delta_k$};
    
    \end{axis}
\end{tikzpicture}

%% file: figures/tikz/fig_sweep.tex
\pgfplotsset{sweepaxis/.style={
  width=\panelwd, height=\panelht,
  xmin=0.58, xmax=1.28,
  xlabel={collective thrust ($\times$ nominal)},
  xtick={0.6,0.8,1.0,1.2},
  every axis title/.style={font=\small\bfseries},
  legend style={font=\small, fill=white, fill opacity=0.85, text opacity=1}
}}
\begin{tikzpicture}
  \begin{axis}[sweepaxis,
    ymin=-0.66, ymax=0.50,
    ylabel={$\min_t h$},
    ytick={-0.6,-0.4,-0.2,0,0.2,0.4},
    legend style={at={(0.50,1.10)}, anchor=south east, anchor=north, legend cell align=left, legend columns=-1}
  ]
  \addplot[draw=none, fill=cRed!12, forget plot]
    coordinates {(0.58,-0.66)(1.28,-0.66)(1.28,0)(0.58,0)};
  \addplot[draw=none, fill=cGrey!22, forget plot]
    coordinates {(0.58,-0.66)(0.65,-0.66)(0.65,0.50)(0.58,0.50)};
  \draw[cRed, line width=0.8pt] (axis cs:0.58,0) -- (axis cs:1.28,0);

  \addplot[cBlue, mark=*, mark size=1.6pt, line width=1.2pt]
    table[x=thrust,y=energy_hmin]{data/sweep.dat};
    \addlegendentry{effort-min}
  \addplot[cOrange, mark=square*, mark size=1.6pt, line width=1.2pt]
    table[x=thrust,y=greedy_hmin]{data/sweep.dat};
    \addlegendentry{greedy DAAM}
  \addplot[cGrey!80, mark=triangle*, mark size=1.6pt, line width=1.2pt]
    table[x=thrust,y=lowpass_hmin]{data/sweep.dat};
    \addlegendentry{low-pass DAAM}
  \addplot[cGreen!70!black, mark=diamond*, mark size=1.8pt, line width=0.9pt]
    table[x=thrust,y=cbf_hmin]{data/sweep.dat};
    \addlegendentry{proposed}

  \node[font=\small, cRed, anchor=south west]
    at (axis cs:0.66,-0.10) {certified floor};
  \node[font=\small, cGrey!70!black, rotate=90, anchor=south, align=center]
    at (axis cs:0.635,-0.32) {$L^{\mathrm{op}}<\ldrop$};
  \end{axis}
\end{tikzpicture}%
\hspace{0.35cm}%
\begin{tikzpicture}
  \begin{axis}[sweepaxis,
    ymode=log,                     
    ymin=1.5e-4, ymax=4e-1,
    ylabel={RMS wrench error},
    ytick={1e-3,1e-2,1e-1},
    legend style={at={(0.50,1.10)}, anchor=south east, anchor=north, legend cell align=left, legend columns=-1}
  ]
  \addplot[draw=none, fill=cGrey!22, forget plot]
    coordinates {(0.58,1.5e-4)(0.65,1.5e-4)(0.65,4e-1)(0.58,4e-1)};

  \addplot[cBlue, mark=*, mark size=1.6pt, line width=1.2pt]
    table[x=thrust,y=energy_werr]{data/sweep.dat};
    \addlegendentry{effort-min}
  \addplot[cOrange, mark=square*, mark size=1.6pt, line width=1.2pt]
    table[x=thrust,y=greedy_werr]{data/sweep.dat};
    \addlegendentry{greedy DAAM}
  \addplot[cGrey!80, mark=triangle*, mark size=1.6pt, line width=1.2pt]
    table[x=thrust,y=lowpass_werr]{data/sweep.dat};
    \addlegendentry{low-pass DAAM}
  \addplot[cGreen!70!black, mark=diamond*, mark size=1.8pt, line width=0.9pt]
    table[x=thrust,y=cbf_werr]{data/sweep.dat};
    \addlegendentry{proposed}

  \node[font=\small, cGrey!70!black, rotate=90, anchor=south, align=center]
  at (axis cs:0.635, 15e-4) {$L^{\mathrm{op}}<\ldrop$};
  \end{axis}
\end{tikzpicture}

%% file: figures/tikz/fig_lpf2.tex
\begin{tikzpicture}
  \begin{axis}[
    width=\panelwd, height=\panelht,
    xmin=0, xmax=2,
    xlabel={Time [s]},
    ymode=log,
    ymin=2e-3, ymax=3e3,
    ylabel={$\max_i|\dot v_i^{\mathrm{cmd}}|$ $[\mathrm{s^{-2}}]$},
    ytick={1e-2,1e0,1e2},
    every axis title/.style={font=\small\bfseries},
    legend style={font=\small, fill=white, fill opacity=0.85, text opacity=1,
                  at={(0.50,1.10)}, anchor=north, legend cell align=left, legend columns=-1}
  ]
  \addplot[cOrange, line width=1.2pt]
    table[x=t,y=rate_greedy]{data/rate_all.dat};
    \addlegendentry{greedy DAAM}

  \addplot[cGrey!80, line width=1.2pt]
    table[x=t,y=rate_lowpass]{data/rate_all.dat};
    \addlegendentry{low-pass DAAM}

  \addplot[cGreen!70!black, line width=1.4pt]
    table[x=t,y=rate_cbf]{data/rate_all.dat};
    \addlegendentry{proposed}
  \end{axis}
\end{tikzpicture}%
\hspace{0.35cm}%
\begin{tikzpicture}
  \begin{axis}[
    width=\panelwd, height=\panelht,
    xmin=0, xmax=2,
    xlabel={Time [s]},
    ymin=-0.72, ymax=0.82,
    ylabel={$h(t)$},
    ytick={-0.5,-0.25,0,0.25,0.5,0.75},
    every axis title/.style={font=\small\bfseries},
    legend style={font=\small, fill=white, fill opacity=0.85, text opacity=1,
                  at={(0.50,1.10)}, anchor=north, legend cell align=left, legend columns=-1}
  ]
  \addplot[draw=none, fill=cRed!12, forget plot]
    coordinates {(0,-0.72)(2,-0.72)(2,0)(0,0)};
  \draw[cRed, line width=0.8pt]
    (axis cs:0,0) -- (axis cs:2,0);

  \addplot[cBlue, line width=1.2pt, dashed]
    table[x=t,y=energy]{data/h_all.dat};
    \addlegendentry{effort-min}

  \addplot[cOrange, line width=1.2pt]
    table[x=t,y=greedy]{data/h_all.dat};
    \addlegendentry{greedy DAAM}

  \addplot[cGrey!80, line width=1.2pt]
    table[x=t,y=lowpass]{data/h_all.dat};
    \addlegendentry{low-pass DAAM}

  \addplot[cGreen!70!black, line width=1.0pt]
    table[x=t,y=cbf]{data/h_all.dat};
    \addlegendentry{proposed}

  \node[font=\small, cRed, anchor=north west]
    at (axis cs:0.02,-0.01) {certified floor};



  \end{axis}
\end{tikzpicture}

%% file: figures/tikz/fig_delayregimes.tex
\begin{tikzpicture}
    \begin{axis}[
      width=\panelwd, height=\panelht,
      xmode=log, xmin=0.7, xmax=280, ymin=-1.05, ymax=0.40,
      xlabel={input delay $\tau_d$ [ms]}, ylabel={$\min_t h$},
      xtick={1,2,5,10,20,50,100,200},
      xticklabels={1,2,5,10,20,50,100,200},
      ytick={-1,-0.75,-0.5,-0.25,0,0.25},
      legend style={at={(0.02,0.03)}, anchor=south west, font=\tiny, fill=white, fill opacity=0.85, text opacity=1},
      clip mode=individual
    ]
    
    \addplot[draw=none, fill=cGreen!16, forget plot]
      coordinates {(0.7,-1.05)(1.9636,-1.05)(1.9636,0.40)(0.7,0.40)}--cycle;
      
    \addplot[draw=none, fill=cOrange!13, forget plot]
      coordinates {(1.9636,-1.05)(100,-1.05)(100,0.40)(1.9636,0.40)}--cycle;
      
    \addplot[draw=none, fill=cRed!14, forget plot]
      coordinates {(100,-1.05)(280,-1.05)(280,0.40)(100,0.40)}--cycle;
    
    \addplot[draw=cGrey!70, line width=0.3pt, pattern=north west lines, pattern color=cGrey!40, forget plot]
      coordinates {(1,-1.05)(10,-1.05)(10,0.40)(1,0.40)}--cycle;
    
    \draw[cGrey!60!black, line width=0.6pt, dashed] (axis cs:0.7,0) -- (axis cs:280,0);
    
    \draw[black, dotted, line width=0.8pt] (axis cs:1.9636,-1.05)--(axis cs:1.9636,0.40);
    
    \addplot[cRed, mark=*, mark size=1.5pt, line width=0.9pt]
      table[x=tau_d_ms,y=minh_plain]{data/delay.dat};
    
    \addplot[black, dashed, mark=none, line width=0.8pt]
      table[x=tau_d_ms,y=eta_bound_clipped]{data/delay.dat};
    
    \addplot[only marks, mark=square*, mark size=1.9pt, cBlue]
      table[x=tau_d_ms,y=minh_margined]{data/delay_marg.dat};
    
    
      
      
    
    
    
      
    
    \node[font=\tiny, anchor=south west, fill=white, fill opacity=0.85, text opacity=1, inner sep=1pt, rounded corners=1pt] at (axis cs:20,0.145) {$+0.140$};
    \node[font=\tiny, anchor=south west, fill=white, fill opacity=0.85, text opacity=1, inner sep=1pt, rounded corners=1pt] at (axis cs:47,0.102) {$+0.097$};
    
    \end{axis}
\end{tikzpicture}